\documentclass{article}
\usepackage{blindtext}
\usepackage[preprint]{tmlr}

\usepackage{graphicx}
\usepackage{amssymb,mathtools}
\usepackage{hyperref}
\usepackage{subcaption}
\usepackage{booktabs}
\usepackage{multirow}
\usepackage[x11names]{xcolor}
\usepackage{soul}  

\usepackage{natbib}
\usepackage{amsthm}
\newtheorem{lemma}{Lemma}
\newtheorem{remark}{Remark}

\theoremstyle{plain}

\theoremstyle{definition}

\setstcolor{red}

\title{Expanding SPHERE-JEPA: A Family of Statistical Regularizers for the Hypersphere}

\author{\name Léo Nicollier \email leo.nicollier@gmail.com \\
       \addr Université Paris-Saclay, CNRS, ENS Paris-Saclay, Centre Borelli, France \\
       Advanced Track and Trace
       \AND
       \name Enric Meinhardt-Llopis \\
       \addr Université Paris-Saclay, CNRS, ENS Paris-Saclay, Centre Borelli, France
       \AND
       \name Max Dunitz \\
       \addr Advanced Track and Trace
       \AND
       \name Marc Pic \\
       \addr Advanced Track and Trace
       \AND
       \name Pablo Musé \\
       \addr 
IIE, Facultad de Ingeniería, Universidad de la Republica, Uruguay\\ 
Université Paris-Saclay, CNRS, ENS Paris-Saclay, Centre Borelli, France
       \AND
       \name Gabriele Facciolo \\
       \addr Université Paris-Saclay, CNRS, ENS Paris-Saclay, Centre Borelli, France \\  
       Institut Universitaire de France
       }

\begin{document}

\maketitle
\begin{abstract}
In Self-Supervised Learning (SSL), preventing representation collapse by explicitly enforcing a uniform distribution on the unit hypersphere has proven to be effective. 
However, current frameworks typically rely on \textit{sliced} statistical regularizers such as SIGReg (used in LeJEPA) and SUSReg (used in SPHERE-JEPA), which approximate this continuous objective via Monte Carlo sampling along random 1D directions. 
This stochasticity injects projection variance into the training gradients, destabilizing optimization, and hindering convergence.
In this work, we first show that analytically integrating out these random projections natively yields a deterministic Maximum Mean Discrepancy (MMD), bypassing the variance of sliced methods. 
Motivated by this equivalence, we formulate full-dimensional objectives for MMD, Kernel Stein Discrepancy (KSD), and Kullback–Leibler (KL) divergence directly on the sphere to enforce a uniform distribution.
To prevent spatial bias, we equip these tests with rotationally invariant kernels constructed via spectral theory, systematically evaluating two canonical families: smooth exponential decay (Heat) and strict frequency cutoff (Bandlimited) filters.
Empirically, removing projection-induced noise results in more stable optimization, faster convergence, and consistent improvements over stochastic sliced regularizers on ImageNet and Galaxy10.
Furthermore, we reveal that the choice of the statistical test shapes the geometry of the learned latent space: MMD and KSD favor locally clustered organization suitable for object-centric domains, whereas the continuous KDE-based KL divergence promotes fine-grained instance separation, yielding the strongest results on unclustered procedural texture retrieval.
\end{abstract}

\section{Introduction}
\label{sec:intro}

A central paradigm in modern Self-Supervised Learning (SSL) is the multi-view formulation, where an encoder learns representations by minimizing an invariance objective across augmented views of the same input~\citep{simclr, byol}.
To prevent representation collapse, modern architectures employ explicit global regularization~\citep{balestriero2025lejepaprovablescalableselfsupervised}.
Recent findings demonstrate that enforcing a uniform distribution on the unit hypersphere $\mathbb{S}^{d-1}$ is an optimal target, minimizing worst-case errors for downstream evaluations~\citep{SPHEREJEPA}.

To enforce hyperspherical uniformity, frameworks such as SPHERE-JEPA~\citep{SPHEREJEPA} (via SUSReg) rely on \textit{sliced} statistical regularizers, which project representations onto random 1D directions and compute the Epps-Pulley~\citep{EPTest} (EP) test.
However, relying on this Monte Carlo approximation naturally injects ``projection variance'' into the training gradients, destabilizing optimization and slowing convergence.
Concurrently, KerJEPA~\citep{zimmermann2025kerjepakerneldiscrepancieseuclidean} demonstrated that analogous sliced regularizers enforcing Gaussian embeddings can be analytically integrated to entirely bypass this variance, resulting in stable, closed-form Maximum  Mean Discrepancy (MMD).

In this work, we extend this insight to the uniform distribution on the sphere. 
Building upon the equivalence established by KerJEPA, we demonstrate that analytically integrating out the random projections over $\mathbb{S}^{d-1}$ similarly yields a deterministic, closed-form Maximum Mean Discrepancy (MMD). 
Motivated by the strict suboptimality of 1D proxies, we focus only on exact multivariate statistics.
While KerJEPA successfully employed MMD and Kernel Stein Discrepancy (KSD)~\citep{KSD} to enforce unconstrained Gaussian priors in Euclidean spaces, we adapt these exact multivariate statistical tests---alongside the Kullback-Leibler (KL) divergence~\citep{KL}---to operate directly on the manifold, thereby enforcing hyperspherical uniformity.

Because standard empirical distribution tests do not generalize beyond 1D, exact high-dimensional statistical testing intrinsically relies on kernel methods. 
However, to ensure that the kernel does not privilege any specific region of the hypersphere $\mathbb{S}^{d-1}$, we restrict our focus to rotationally invariant kernels. 
Any such kernel is characterized by the Laplacian's eigenvalues. Leveraging this spectral foundation, we systematically evaluate three distinct kernels: the exact kernel implicitly induced by the analytical integration of SUSReg~\citep{SPHEREJEPA}, alongside two canonical frequency filters adapted to the sphere: the Heat kernel (acting as a smooth exponential decay) and the Bandlimited kernel (enforcing a strict frequency cutoff).

In summary, our main contributions are as follows:

\begin{itemize}
\item \textbf{Deterministic Hyperspherical Regularization:}
We derive a deterministic formulation of the SUSReg objective by analytically integrating the random 1D projections.
This results in a closed-form Maximum Mean Discrepancy (MMD) objective directly defined on $\mathbb{S}^{d-1}$, removing Monte Carlo projection sampling during training.
\item \textbf{Hyperspherical Statistical Regularizers:}
We adapt multivariate statistical objectives---Maximum Mean Discrepancy (MMD), Kernel Stein Discrepancy (KSD), and Kullback--Leibler (KL) divergence---to enforce uniform priors directly on the hyperspherical manifold.
\item \textbf{Kernel Choice from Spectral Characterization:}
Guided by the spectral characterization of rotationally invariant kernels on the sphere, we study two kernel families on the hypersphere: Heat and Bandlimited kernels.
\item \textbf{Improved Representation Quality:}
Across MMD and KSD variants, our deterministic objectives consistently outperform SUSReg, yielding gains of up to +$1.3/+1.6$ points (linear/$k$-NN) on ImageNet-100 and $+5.1/+4.1$ points on Galaxy10.
\end{itemize}

The remainder of this paper is organized as follows. Section~\ref{sec:preliminaries} reviews the standard multi-view SSL framework on the hypersphere. 
In Section~\ref{sec:suboptimality}, we demonstrate the suboptimality of sliced proxies and establish their equivalence to a closed-form MMD.
Section~\ref{sec:stats_tests} introduces the exact multivariate statistical tests, and Section~\ref{sec:kernel_choice} details the design of our rotationally invariant spectral kernels. 
We derive explicit closed-form training objectives in Section~\ref{sec:explicitloss}. 
Finally, Section~\ref{sec:experiments} presents our empirical evaluation across multiple benchmarks, and Section~\ref{sec:conclusion} concludes the work.
Detailed proofs and derivations are deferred to the Appendices.

\section{Preliminaries: Self-Supervised Learning on the Hypersphere}
\label{sec:preliminaries}

A central paradigm in modern Self-Supervised Learning (SSL) is the multi-view formulation, in which different augmented views of the same sample are used as supervisory signals~\citep{simclr, byol}. 
Formally, we consider a dataset of $N$ independent samples. Each sample is processed through a stochastic data augmentation pipeline to generate $V_a$ views $x_{n,v} \in \mathbb{R}^D$, where $n \in \{1, \dots, N\}$ and $v \in \{1, \dots, V_a\}$. Among these, we distinguish $V_g \leq V_a$ \emph{global} views---which typically correspond to large-scale crops---from the remaining \emph{local} views.

An encoder network $f_\theta$ maps each view to a representation $z_{n,v} = f_\theta(x_{n,v}) \in \mathbb{R}^d$. Following the exact formulation introduced by SPHERE-JEPA~\citep{SPHEREJEPA}, these embeddings are subsequently $\ell_2$-normalized to lie on the unit hypersphere:
\[
\tilde z_{n,v} = \frac{z_{n,v}}{\|z_{n,v}\|} \in \mathbb S^{d-1}.
\]

To enforce representation consistency across augmentations, the model minimizes an alignment objective. 
This multi-view invariance loss can be formulated as:
\begin{equation} \label{equ:linv}
    \mathcal L_{\mathrm{inv}} = \frac{1}{V_a} \sum_{v=1}^{V_a} \|\tilde z_{n,v} - \mu_n\|_2^2, \qquad \mu_n = \frac{1}{V_g} \sum_{v'=1}^{V_g} \tilde z_{n,v'}.
\end{equation}

However, optimizing $\mathcal L_{\mathrm{inv}}$ alone is degenerate, as the network can easily achieve zero loss by mapping all inputs to a single constant vector~\citep{byol} (representation collapse). To prevent such trivial solutions and strictly control the global geometry of the representation space, SSL frameworks introduce an explicit regularization term~\citep{dinov1, balestriero2025lejepaprovablescalableselfsupervised}, leading to the general training objective:
\begin{equation} \label{equ:total_loss}
    \mathcal L = (1-\lambda)\mathcal L_{\mathrm{inv}} + \lambda \mathcal L_{\mathrm{reg}},
\end{equation}
where $\lambda \in (0, 1)$ is a hyperparameter that controls the trade-off between the two objectives.
In this formulation, while $\mathcal L_{\mathrm{inv}}$ attracts views of the same instance, $\mathcal L_{\mathrm{reg}}$ constrains the global distribution of the embeddings.
Based on spherical uniformity optimality~\citep{SPHEREJEPA}, our goal is to design $\mathcal L_{\mathrm{reg}}$ such that it explicitly enforces the uniform distribution on the hypersphere, $q = \mathrm{Unif}(\mathbb{S}^{d-1})$. 
The critical question then becomes: how should we practically compute the discrepancy between the empirical embeddings and this continuous uniform target?

\section{Beyond Sliced Methods: Full-Dimensional Tests}
\label{sec:suboptimality}

To regularize high-dimensional representation spaces, recent frameworks~\citep{balestriero2025lejepaprovablescalableselfsupervised, SPHEREJEPA} rely on \textit{sliced} methods. 
Justified by the Cramér-Wold theorem~\citep{CuestaAlbertos2007CramerWold}, these approaches project the embeddings onto random 1D directions, effectively reducing the complex multi-dimensional distribution matching problem into a series of tractable 1D statistical tests. 
However, approximating this continuous objective via Monte Carlo sampling inherently injects an artificial projection variance into the training gradients.
Fortunately, this stochasticity is unnecessary.

\textbf{Equivalence to a Closed-Form MMD.} By leveraging Bochner's~\citep{rahimi2007random} and Fubini's theorems, we can analytically integrate out all possible random projection directions $a \in \mathbb{S}^{d-1}$.
As detailed in Appendix~\ref{app:equivalencemmd}, we apply the analytical integration framework from KerJEPA~\citep{zimmermann2025kerjepakerneldiscrepancieseuclidean}---which has recently established this for the Gaussian distribution---to the uniform distribution on $\mathbb{S}^{d-1}$. 
Specifically, let $X$ and $Y$ be random variables on $\mathbb{S}^{d-1}$ (representing, for instance, the empirical embedding distribution and the uniform target), and let $\mathrm{EP}(\cdot, \cdot)$ denote the 1D Epps-Pulley discrepancy~\citep{balestriero2025lejepaprovablescalableselfsupervised}.
This exact integration reveals that the expected sliced Epps-Pulley discrepancy natively yields a deterministic Maximum Mean Discrepancy~\citep{MMD} (MMD) with an induced kernel $\bar{k}$:
\begin{equation}
\mathbb{E}_{a\sim\mathrm{Unif}(\mathbb{S}^{d-1})} \big[ \mathrm{EP}(a^\top X, a^\top Y) \big] = \mathrm{MMD}_{\bar{k}}^2(X, Y).
\end{equation}

Crucially, because the projection directions are distributed uniformly, the induced kernel $\bar{k}$ is rotationally invariant and admits the following explicit 1D integral representation:
\begin{equation}
\bar{k}(x, y) = \int_{-1}^{1} \exp\left(-(1 - x^\top y) t^2 \right) \rho_d(t) \, dt,
\end{equation}
where $\rho_d(t) \propto (1-t^2)^{\frac{d-3}{2}}$ is the marginal density of the hypersphere. 
In practice, we compute this exact kernel using Gauss-Jacobi quadrature. 
This numerical scheme naturally absorbs the geometric density $\rho_d(t)$ into its precomputed weights, transforming the analytical integral into a simple, deterministic dot product.

\textbf{The Cost of Projection Variance.} Compared to our exact formulation, evaluating the sliced objective via standard Monte Carlo sampling inherently suffers from projection noise (Figure~\ref{fig:fig2}).
This artificial variance destabilizes the training gradients and bounds optimization efficiency, converging to the true MMD only in the infinite-projection limit.
By eliminating this stochasticity, our deterministic closed-form MMD achieves notably faster convergence and superior training dynamics, as demonstrated by the accelerated training convergence on ImageNet-100 (Figure~\ref{fig:train_acc_susreg}).
Motivated by the strict empirical and theoretical suboptimality of 1D proxies, we discard sliced approximations to focus on exact multivariate discrepancies natively on the manifold.

\begin{figure}
\centering
\begin{minipage}{0.48\textwidth}
    \centering
    \includegraphics[width=\linewidth]{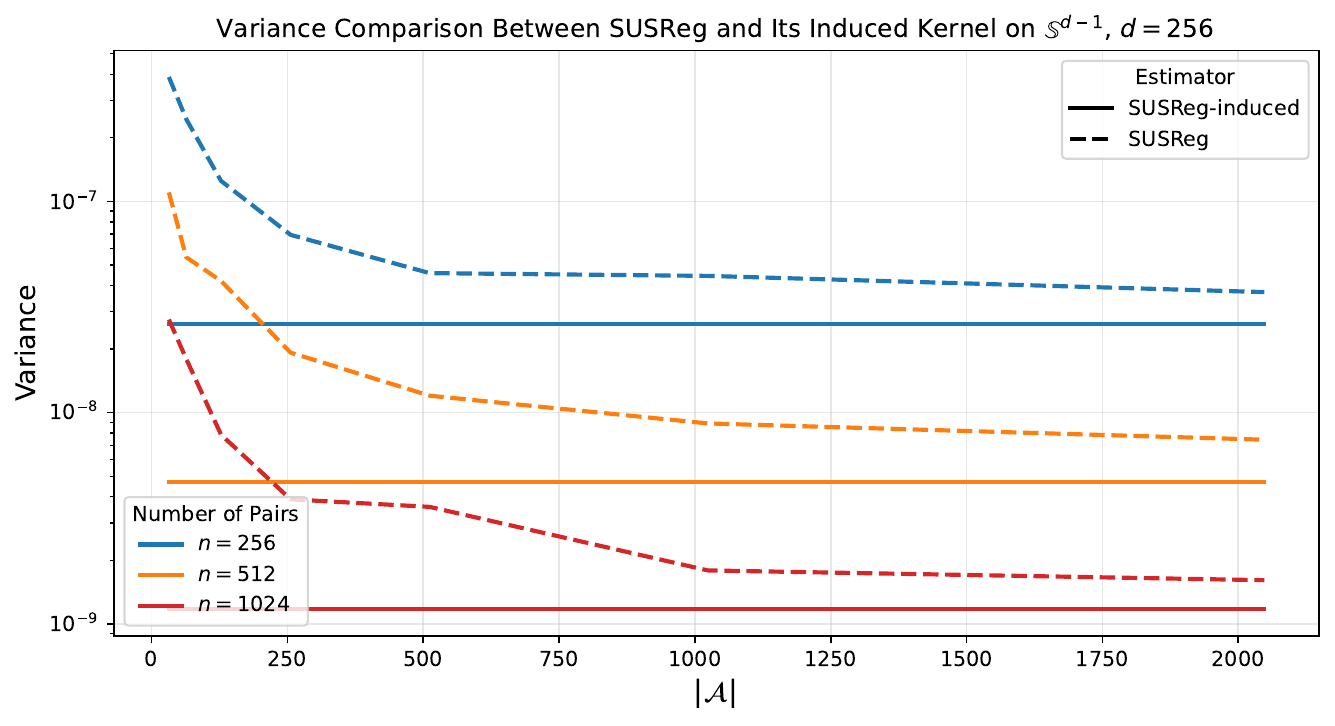}
    \caption{\textbf{Projection Variance.} Variance comparison between standard SUSReg (dashed) and its induced MMD estimator (solid) on $\mathbb{S}^{255}$.}
    \label{fig:fig2}
\end{minipage}\hfill
\begin{minipage}{0.48\textwidth}
    \centering
    \includegraphics[width=\linewidth]{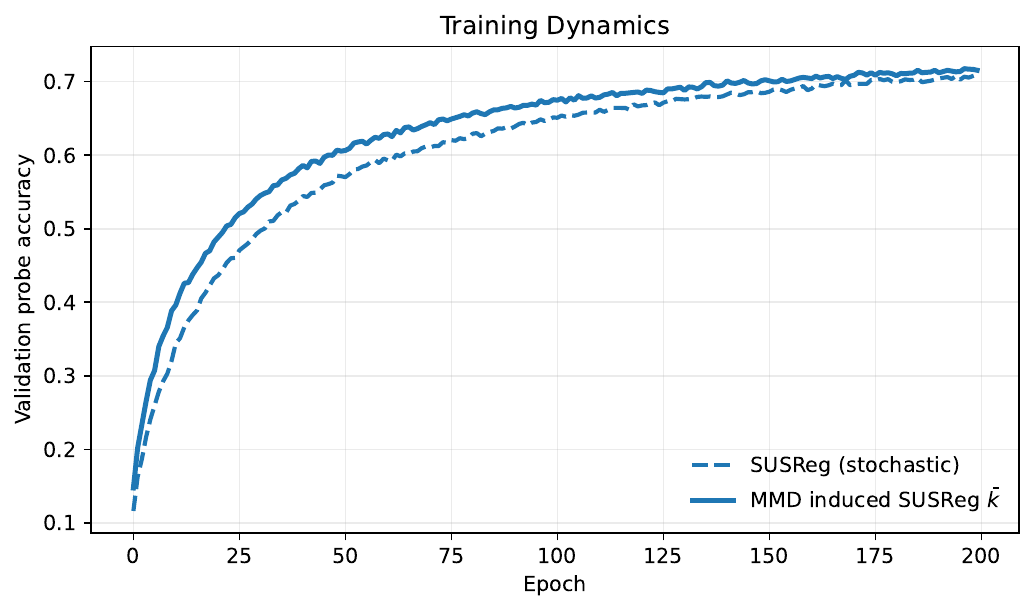}
    \caption{\textbf{Training Dynamics.} The closed-form MMD (solid) entirely eliminates projection noise, achieving notably faster convergence.}
    \label{fig:train_acc_susreg}
\end{minipage}
\end{figure}

\section{Statistical Tests for Hyperspherical Uniformity} 
\label{sec:stats_tests}

As established in Section~\ref{sec:preliminaries}, our objective is to match the empirical distribution of representations with the uniform target $q = \mathrm{Unif}(\mathbb{S}^{d-1})$.
For each view $v$, let $\hat p_v$ denote the empirical distribution of normalized embeddings $\{\tilde z_{n,v}\}_{n=1}^N$. To enforce uniformity across views, we minimize the general regularization loss
\begin{equation}
    \mathcal L_{\mathrm{reg}} = \frac{1}{V_a} \sum_{v=1}^{V_a} D(\hat p_v,q),
\end{equation}
where $D$ is a statistical discrepancy measure between probability distributions.
The choice of this discrepancy is not a detail; it dictates how the algorithm behaves and shapes the geometry of the learned latent space.
Because different statistical tests penalize deviations from uniformity in distinct ways, they implicitly favor different topological arrangements---such as locally clustered spaces suited for object-centric classification, or unclustered, mutually repelled spaces optimal for continuous textures.

For notational simplicity, we drop the view index $v$ when the context is clear, denoting the empirical distribution simply as $\hat p$.

Motivated by the exact high-dimensional equivalence demonstrated in Section~\ref{sec:suboptimality}, we bypass stochastic 1D approximations to evaluate $D(\hat p,q)$ directly through full-dimensional statistical tests on the manifold. All the following discrepancies rely on a positive definite base kernel $k:\mathbb S^{d-1}\times\mathbb S^{d-1}\to\mathbb R$. We introduce and formulate objectives for the following three distinct statistical tests:

\paragraph{Maximum Mean Discrepancy (MMD).} MMD quantifies the distance between distributions by comparing their kernel mean embeddings~\citep{MMD}. In our framework, the squared MMD between the empirical batch distribution $\hat{p}$ and the continuous uniform target $q$, given a positive definite base kernel $k$, expands as:
\begin{equation}
    D_{\mathrm{MMD}^2}(\hat{p}, q) = \mathbb{E}_{x, x' \sim \hat{p}}[k(x, x')] + \mathbb{E}_{y, y' \sim q}[k(y, y')] - 2\mathbb{E}_{x \sim \hat{p}, y \sim q}[k(x, y)].
\end{equation}
As detailed in Appendix~\ref{app:mmd_sphere}, because the target $q$ is uniform and the kernel is rotationally invariant, all expectations with respect to $q$ reduce to analytic constants. As a result, the training objective $D_{\mathrm{MMD}}(\hat{p}, q)$ simplifies: it relies solely on pairwise kernel similarities within the empirical batch $\hat{p}$, eliminating the need for target sampling.

\paragraph{Kernel Stein Discrepancy (KSD).} KSD quantifies distribution mismatch by evaluating how well empirical samples satisfy Stein's identity for a target distribution~\citep{KSD}. Formally, the squared KSD for our empirical batch $\hat{p}$ against the uniform target $q$ is defined as the expectation of a Stein kernel $k_q$:
\begin{equation}
    D_{\mathrm{KSD}^2}(\hat{p}, q) = \mathbb{E}_{x, x' \sim \hat{p}}[k_q(x, x')].
\end{equation}
The full construction of this Stein kernel, detailed in Appendix~\ref{app:ksd_sphere}, relies on the target distribution's score function, $s_q(x) = \nabla_x \log q(x)$, and a base reproducing kernel $k$. Because the uniform target $q$ has a constant density on the hypersphere $\mathbb{S}^{d-1}$, its score function vanishes. Consequently, the objective $D_{\mathrm{KSD}}(\hat{p}, q)$ can be evaluated in closed form using only the base kernel's first and second derivatives with respect to the pairwise cosine similarity.

\paragraph{Kullback-Leibler (KL) Divergence via Kernel Density Estimation.} The standard KL divergence measures the relative entropy between distributions. To match our empirical batch $\hat{p}$ with the uniform target $q$, the objective takes the form:
\begin{equation}
    \mathrm{KL}(\hat{p}\|q) = \mathbb{E}_{x \sim \hat{p}} [\log \hat{p}(x) - \log q(x)].
\end{equation}
However, this standard formulation cannot directly compare the discrete empirical distribution $\hat{p}$ against the continuous target $q$. To resolve this domain gap, we construct a continuous surrogate for the batch using a Kernel Density Estimator (KDE), $\tilde{p}(x) = \mathbb{E}_{x' \sim \hat{p}}[k(x, x')]$. Assuming this smooth surrogate faithfully captures the underlying data distribution, substituting it into the equation above yields our tractable sample-based objective:
\begin{equation}
    D_{\mathrm{KL}}(\hat{p}, q) \approx \mathbb{E}_{x \sim \hat{p}} [\log \tilde{p}(x) - \log q(x)].
\end{equation}

\section{Spectral Kernel Design on the Hypersphere} \label{sec:kernel_choice}
The effectiveness of MMD, KSD, and KL-based discrepancies depends on the properties of the kernel $k$. 
To prevent the regularization objective from biasing the representation space toward any preferred direction, we constrain the kernel to be rotationally invariant (zonal): $k(x,y) = \varphi(x^\top y)$.
We systematically normalize it so that $\varphi(1) = 1$.

Viewed through the lens of spectral theory, the hypersphere is equipped with the Laplace-Beltrami operator, whose eigenfunctions are the spherical harmonics.
By Schoenberg's theorem~\citep{PDFSphere}, any valid positive-definite zonal kernel on $\mathbb{S}^{d-1}$ admits a spectral expansion over normalized Gegenbauer polynomials $\tilde{C}_\ell^{(\alpha)}$ with non-negative coefficients.
Following the constructions of Giné and Wahba~\citep{gine1975invariant, wahba1981spline}, we can explicitly control the smoothness of the Reproducing Kernel Hilbert Space (RKHS) in which we measure the discrepancies by defining these coefficients as weights $w(\lambda_\ell)$ that depend on the smoothness of the corresponding mode:
\begin{equation}
    \varphi(c) = \frac{1}{Z} \sum_{\ell=0}^\infty w(\lambda_\ell)\, \tilde{C}_\ell^{(\alpha)}(c),
\end{equation}
where $c = x^\top y$, $\alpha = (d-2)/2$, and $\lambda_\ell = \ell(\ell+d-2)$ are the Laplacian eigenvalues.
The normalization constant $Z = \sum_{\ell=0}^\infty w(\lambda_\ell)\, \tilde{C}_\ell^{(\alpha)}(1)$ enforces $\varphi(1)=1$. 

Importantly, each eigenvalue $\lambda_\ell$ equals the Dirichlet energy of its
corresponding eigenfunction. Because these kernels heavily penalize high-energy
(unsmooth) functions, statistical tests derived from this general expansion are
broadly referred to as ``Sobolev tests'' in the statistics
literature~\citep{gine1975invariant, sobolevkernel}.

In practice, the spectral weights $w(\lambda_\ell)$ act as frequency filters. Low values of $\ell$ correspond to smooth, global geometric variations, while higher values capture increasingly oscillatory spatial patterns. This framework naturally motivates two canonical filter choices natively adapted to the sphere's geometry (illustrated in Figure~\ref{fig:kernels}):

\begin{itemize}
    \item \textbf{Heat Kernel (Smooth Decay):} Applying an exponential decay $w(\lambda_\ell) = e^{-t\lambda_\ell}$, where $t > 0$ is a scale parameter, yields the heat kernel~\citep{HeatKernel}. This progressive damping of highly oscillatory patterns results in a stable, multi-scale similarity measure.
    \item \textbf{Bandlimited Kernel (Hard Cutoff):} Alternatively, applying a strict low-pass filter $w(\lambda_\ell) = \mathbf{1}_{\ell \leq L}$ yields a bandlimited kernel. This acts as an exact projection onto the first $L$ eigenmodes, capturing global geometry while rigorously filtering out fine-grained spatial noise.
\end{itemize}
Certain weight functions that lead to kernels with known closed form include the chordal distance kernel of the Giné $S_n$ test, the Poisson kernel, and (in terms of sums of special functions) the thin-plate spline kernel of order one \cite{beatson2018thinplate, sobolevkernel}.

\begin{figure}
\centering
\includegraphics[width=0.9\linewidth]{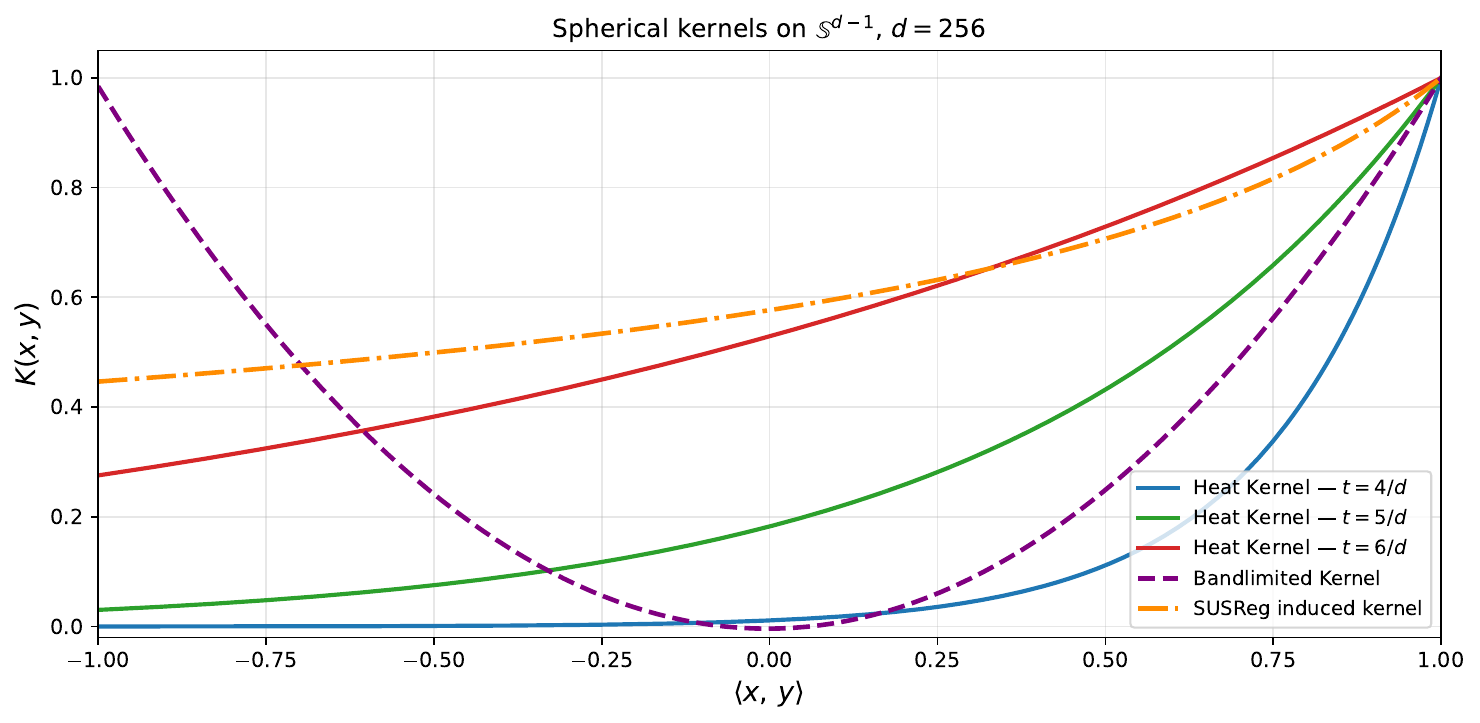}
\caption{\textbf{Profiles of normalized zonal kernels on the hypersphere $\mathbb{S}^{d-1}$ ($d=256$).}  
We compare the heat kernel at different scale parameters ($t \in \{4/d, 5/d, 6/d\}$) and the Bandlimited kernel against the implicit kernel induced by the SUSReg objective. 
All kernels are evaluated as a function of the pairwise cosine similarity $c =x^\top y  \in [-1, 1]$.}
\label{fig:kernels}
\end{figure}

Having established these rigorous kernel families, the final step is to integrate them into our general discrepancy measures.
Because these specific filters yield well-behaved scalar functions of the pairwise cosine similarity $\varphi(c)$, they allow us to transform the abstract statistical tests introduced in Section~\ref{sec:stats_tests} into exact, computationally tractable training objectives, as we derive in the next section.

\section{Explicit Forms of the Statistical Tests} \label{sec:explicitloss}

Using the rotationally invariant zonal kernels $\varphi(c)$ evaluated on the pairwise cosine similarity $c = x^\top y$, the general discrepancies introduced in Section~\ref{sec:stats_tests} evaluate analytically. This yields the following deterministic, closed-form objectives (complete derivations are deferred to Appendices~\ref{app:ksd_sphere}, \ref{app:mmd_sphere} and \ref{app:kl_sphere}):
\begin{align}
    D_{\mathrm{MMD}} &= \frac{1}{C_{\mathrm{norm/MMD}}} \left( \mathbb E_{x,y\sim \hat p}[\varphi(c)] - C_{\mathrm{bias/MMD}} \right), \label{eq:mmd_explicit} \\
    D_{\mathrm{KSD}} &= \frac{1}{C_{\mathrm{norm/KSD}}} \mathbb E_{x,y\sim \hat p} \!\left[ \frac{1}{2}\Big((c^2-1)\varphi''(c) + c(d-1)\varphi'(c)\Big) \right], \label{eq:ksd_explicit} \\
    D_{\mathrm{KL}}  &= \frac{1}{C_{\mathrm{norm/KL}}} \left( \mathbb E_{x\sim \hat p} \left[ \log \mathbb E_{y\sim \hat p_{-x}} [\varphi(c)] \right] - C_{\mathrm{bias/KL}} \right). \label{eq:kl_explicit}
\end{align}

For MMD and KSD, expectations over the empirical distribution $\hat p$ are computed using standard V-estimators. For the KL divergence, $\hat p_{-x}$ denotes the leave-one-out empirical estimator to prevent trivial self-similarity singularities.

To ensure strict comparability across objectives, each discrepancy is offset and scaled by analytic constants such that a total representation collapse to a Dirac delta distribution (i.e., $c = 1$ for all pairs) yields a worst-case loss of exactly $1$. Assuming a normalized base kernel $\varphi(1)=1$, these constants are defined as follows:
\begin{itemize}
    \item \textbf{MMD:} $C_{\mathrm{bias/MMD}} = \int_{-1}^1 \varphi(v)\rho_d(v)dv$ is evaluated numerically via highly efficient Gauss-Jacobi quadrature utilizing the hyperspherical marginal density $\rho_d(v) \propto (1 - v^2)^{\frac{d-3}{2}}$, and $C_{\mathrm{norm/MMD}} = 1 - C_{\mathrm{bias/MMD}}$.
    \item \textbf{KSD:} $C_{\mathrm{norm/KSD}} = \frac{d-1}{2}\varphi'(1)$ (with no bias term required due to the zero-score property of the uniform target).
    \item \textbf{KL:} $C_{\mathrm{bias/KL}} = \log |\mathbb S^{d-1}|$ and $C_{\mathrm{norm/KL}} = -C_{\mathrm{bias/KL}}$.
\end{itemize}

In summary, Equations~\eqref{eq:mmd_explicit}--\eqref{eq:kl_explicit} provide the exact, closed-form objectives that replace stochastic 1D approximations. 
Computationally, for a batch size $B$, our exact formulation scales as $\mathcal{O}(B^2)$, whereas sliced methods scale as $\mathcal{O}(B|\mathcal{A}|)$, where $|\mathcal{A}|$ is the number of random projections. 
As a result, exact tests are strictly cheaper to compute when the batch size satisfies $B < |\mathcal{A}|$. Because standard sliced implementations default to $|\mathcal{A}| = 1024$ projections~\citep{SPHEREJEPA}, our exact objectives require less compute time for reasonable batch sizes, but become more expensive when scaling $B$ beyond this threshold.

\section{Experimental Evaluation} \label{sec:experiments}

We empirically validate our exact, full-dimensional statistical tests across a diverse suite of benchmarks.
Through these experiments, we aim to answer three practical questions: (1) Does analytically bypassing projection variance (via the induced kernel) consistently improve representation quality over stochastic baselines like SUSReg? 
(2) How do the different spectral kernels (Induced SUSReg, Heat, and Bandlimited) compare empirically?
(3) How do the different exact tests (MMD, KSD, and KL-based) behave depending on the underlying topology of the dataset (e.g., clustered object classes versus unclustered continuous textures)?

Unless otherwise specified, all models employ a projection head outputting to $\mathbb{S}^{255}$. To ensure fair comparisons, we maintain consistent hyperparameter configurations across methods. Specifically, we set the regularization weight to $\lambda=0.05$ for MMD and KSD, strictly matching the established baseline in SPHERE-JEPA~\citep{SPHEREJEPA}. For the KL divergence, we set $\lambda=0.5$; this value is motivated by its structural similarity to the InfoNCE objective, which, when explicitly decomposed into an invariance loss and a repulsive contrastive term, naturally induces an effective regularization weight of $0.5$. Furthermore, for all experiments utilizing the heat kernel, we set the temperature to $t=5/d$ for MMD and KSD, and $t=2/d$ for the KDE-based KL divergence; an ablation justifying these specific thermal operating points is provided in Appendix~\ref{app:temperature_ablation}.

\subsection{Standard Pretraining (ImageNet-100 \& Galaxy10)}
We first evaluate the quality of the representations by pretraining a ResNet-18~\citep{resnet} on ImageNet-100~\citep{deng2009imagenet} for 200 epochs and a ResNet-50~\citep{resnet} on the Galaxy10 dataset~\citep{galaxy10} for 200 epochs (see Figure~\ref{fig:dataset_samples} for dataset samples).
Because our exact regularizers estimate the hyperspherical distribution directly from the empirical batch, maintaining consistent statistics is critical; we therefore strictly fix the global batch size to $256$ across all runs. We subsequently assess the semantic quality of the frozen features using standard linear probing and $k$-NN classification.

As shown in Table~\ref{tab:pretrain_eval}, the exact deterministic regularizers consistently outperform the stochastic SUSReg baseline across both datasets. On ImageNet-100, MMD and KSD variants improve linear probing accuracy by up to 1.3\% and $k$-NN accuracy by up to 1.6\%. On Galaxy10, the performance gains reach up to 5.1\% in linear probing and 4.1\% in $k$-NN classification.

The direct comparison between SUSReg and MMD equipped with the analytically induced kernel $\bar{k}$ isolates the impact of projection noise, as both objectives share the same continuous limit. 
Replacing the stochastic 1D projections with the exact evaluation yields a performance increase of 1.0\% on ImageNet-100 and 4.7\% on Galaxy10 for linear probing.
Among the spectral kernels, the Bandlimited filter ($L=2$) slightly outperforms the smooth Heat kernel and the Induced kernel on these multi-class tasks.
Conversely, the KDE-based KL divergence underperforms on both datasets, scoring below the baseline on ImageNet-100.
This indicates that the continuous repulsion enforced by the KL objective penalizes the local macroscopic clustering necessary for category-level semantic classification.

\begin{figure}
    \centering
    \includegraphics[width=\linewidth]{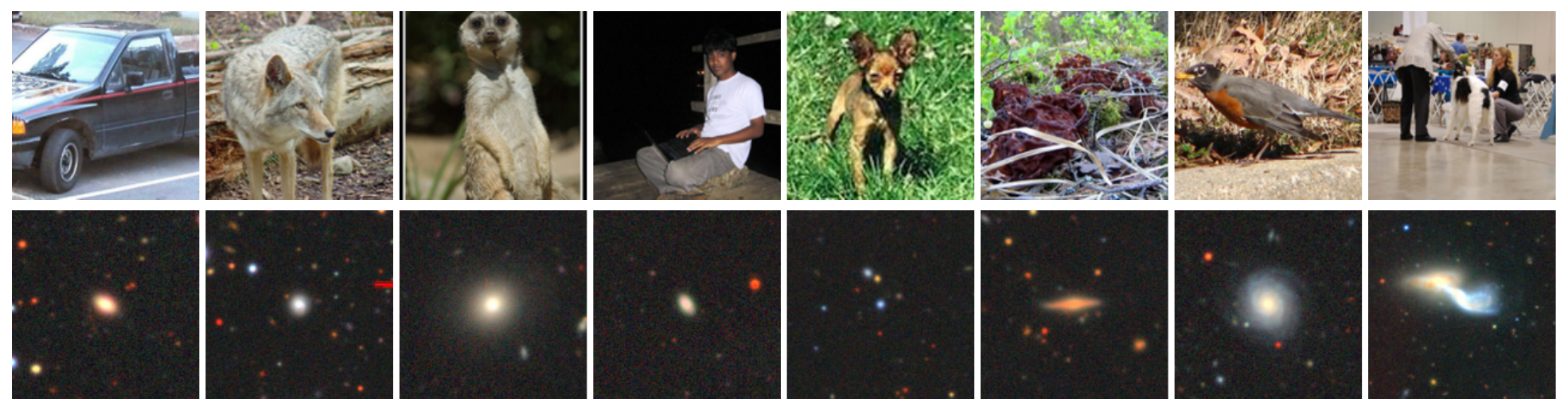}
    \caption{Examples of images from the datasets used in our experiments. The first row shows samples from ImageNet100, and the second row shows samples from Galaxy10.}
    \label{fig:dataset_samples}
\end{figure}

\begin{table}
\centering
\caption{Linear probing and $k$-NN classification accuracy (\%) for standard pretraining on ImageNet-100 and Galaxy10. Values are reported as mean $\pm$ sample standard deviation over seeds. Bold values indicate the best mean performance within each dataset and protocol.}
\label{tab:pretrain_eval}
\resizebox{\linewidth}{!}{
\begin{tabular}{lcccc}
\toprule
\multirow{2}{*}{Method} & \multicolumn{2}{c}{ImageNet-100} & \multicolumn{2}{c}{Galaxy10} \\
\cmidrule(lr){2-3} \cmidrule(lr){4-5}
& Linear & $k$-NN & Linear & $k$-NN \\
\midrule
SUSReg (Stochastic Baseline) & 71.22 $\pm$ 0.58 & 65.67 $\pm$ 0.56 & 72.13 $\pm$ 1.39 & 68.67 $\pm$ 1.04 \\
\midrule
MMD (Induced SUSReg $\bar{k}$) & 72.17 $\pm$ 0.40 & 67.02 $\pm$ 0.23 & 76.78 $\pm$ 0.31 & 71.67 $\pm$ 0.89 \\
MMD (Heat, $t=5/d$) & 72.19 $\pm$ 0.26 & 67.21 $\pm$ 0.22 & 76.31 $\pm$ 0.56 & 71.91 $\pm$ 0.62 \\
MMD (Bandlimited, $L=2$) & 72.26 $\pm$ 0.44 & \textbf{67.25 $\pm$ 0.43} & 76.21 $\pm$ 0.49 & \textbf{72.73 $\pm$ 1.00} \\
\midrule
KSD (Heat, $t=5/d$) & \textbf{72.55 $\pm$ 0.29} & 66.71 $\pm$ 0.31 & 76.76 $\pm$ 0.40 & 72.03 $\pm$ 0.57 \\
KSD (Bandlimited, $L=2$) & 72.19 $\pm$ 0.38 & 66.91 $\pm$ 0.14 & \textbf{77.21 $\pm$ 0.55} & 72.70 $\pm$ 0.77 \\
\midrule
KL (Heat) & 67.72 $\pm$ 0.21 & 62.09 $\pm$ 0.42 & 75.76 $\pm$ 0.99 & 70.29 $\pm$ 0.85 \\
\bottomrule
\end{tabular}
}
\end{table}

\subsection{Instance-Level Texture Retrieval}

Given that the KL divergence underperformed on class-heavy datasets like ImageNet, we hypothesize that its continuous surrogate formulation might be optimally suited for purely unclustered distributions.
To verify this, we investigate instance-level discrimination via nearest-neighbor retrieval on four procedural texture datasets (Cloud, Disk, Flake, Wood) introduced in SPHERE-JEPA~\citep{SPHEREJEPA} (see Figure~\ref{fig:textures} for dataset samples). Following their exact evaluation protocol, this benchmark provides a strictly continuous visual domain where discrete object classes do not exist. Detailed per-dataset results are provided in Appendix~\ref{app:retrieval_results}.

Remarkably, as detailed in Table~\ref{tab:retrieval_avg}, the KDE-based KL divergence strongly dominates its MMD and KSD counterparts in this scenario. KL (Heat) achieves a top-1 retrieval accuracy (Recall@1) of 95.3\%, representing a substantial +6.6 point improvement over the stochastic SUSReg baseline (88.7\%) and outperforming the best alternative continuous metric, KSD (Heat), by +3.3 points. Similar margins are observed across the mean Average Precision (mAP) metrics, demonstrating robust improvements in both the projection head and the underlying embeddings.

This performance gap directly validates our geometric intuition. Because procedural textures lack semantic element clusters, the continuous KL divergence optimally regularizes the manifold by uniformly repelling all instances individually, leading to exact hyperspherical uniformity. Conversely, point-based integral metrics like MMD and KSD inherently accommodate and even encourage macroscopic local clustering. While this behavior is highly beneficial for multi-class discrimination (e.g., ImageNet) where semantic grouping is desirable, it proves fundamentally suboptimal for continuous, instance-level texture spaces.
\begin{table}
\centering
\caption{Average procedural texture retrieval performance across four datasets. The continuous KL divergence strongly dominates on these unclustered domains.}
\label{tab:retrieval_avg}
\resizebox{\linewidth}{!}{
\begin{tabular}{lccccc}
\toprule
Method & Recall@1 & Recall@3 & Recall@5 & mAP & mAP (emb) \\
\midrule
SUSReg (Stochastic Baseline) & 88.7 & 96.1 & 97.7 & 92.6 & 91.4 \\
\midrule
MMD (Induced SUSReg $\bar{k}$) & 89.1 & 96.1 & 97.8 & 92.9 & 91.1 \\
MMD (Heat, $t=5/d$)  & 91.4 & 97.0 & 98.3 & 94.4 & 93.2 \\
MMD (Bandlimited, $L=2$) & 91.5 & 96.6 & 97.7 & 94.3 & 93.7 \\
\midrule
KSD (Heat, $t=5/d$)  & 92.0 & 97.1 & 98.2 & 94.8 & 93.5 \\
KSD (Bandlimited, $L=2$) & 91.6 & 96.6 & 97.7 & 94.3 & 93.6 \\
\midrule
KL (Heat)            & \textbf{95.3} & \textbf{97.6} & \textbf{98.3} & \textbf{96.7} & \textbf{96.3} \\
\bottomrule
\end{tabular}
}
\end{table}

\section{Conclusion} \label{sec:conclusion}

In this work, we first demonstrated that sliced statistical regularizers for hyperspherical uniformity are strictly suboptimal compared to their analytically integrated Maximum Mean Discrepancy (MMD) equivalent. 
Building on this insight, we introduced a broader family of exact, full-dimensional regularizers based on MMD, Kernel Stein Discrepancy (KSD), and the Kullback-Leibler (KL) divergence.
To avoid spatial bias, we grounded these tests in spectral theory, explicitly motivating the use of two canonical kernels: the Heat and Bandlimited filters. 

Empirically, our exact formulations confirmed that completely bypassing Monte Carlo projection noise systematically improves representation quality across all evaluated datasets. 
On object-centric domains, MMD and KSD equipped with our canonical spectral kernels achieved similar state-of-the-art performance, outperforming both the standard stochastic baseline and its exact SUSReg-induced kernel.
Finally, we revealed how the choice of statistical discrepancy fundamentally shapes the learned topology: while MMD and KSD inherently accommodate the local clustering necessary for distinct classes, the continuous KDE-based KL divergence optimally forces fine-grained instance repulsion, yielding vastly superior performance on unclustered procedural textures.

Despite these theoretical and empirical advantages, it is important to contextualize the computational trade-offs of exact manifold regularizers.
Because our full-dimensional tests rely on exact pairwise kernel evaluations, their computational complexity scales quadratically with the global batch size, $\mathcal{O}(B^2)$. 
In contrast, stochastic sliced methods like SUSReg compute 1D statistics and scale linearly with respect to the batch size, $\mathcal{O}(B |\mathcal{A}|)$, where $|\mathcal{A}|$ is the number of random projections. 
Consequently, while our exact regularizers strictly dominate in standard training regimes, projection-based approaches remain a highly practical and necessary alternative when scaling SSL architectures to massive batch sizes where quadratic complexity becomes computationally prohibitive.

\bibliography{bib}

\appendix
\section{Formal Equivalence and Analytical Integration of Sliced Discrepancies}
\label{app:equivalencemmd}

In this section, we provide the formal proofs supporting the theoretical claims made in Section~\ref{sec:suboptimality}. Specifically, we establish that the expected value of the sliced Epps-Pulley (EP) test---which serves as the core statistical regularizer in stochastic methods like SUSReg---is equivalent to a full-dimensional Maximum Mean Discrepancy (MMD) natively operating on $\mathbb{S}^{d-1}$. 

We proceed in three logical steps: (1) we show via Bochner's theorem that the 1D EP test is equivalent to a 1D MMD; (2) we prove via Fubini's theorem that integrating this 1D MMD over all uniform random projections yields a valid full-dimensional MMD; and (3) we geometrically derive the explicit 1D integral form of the induced kernel $\bar{k}$ introduced in the main text.

\begin{lemma}[{The 1D Epps-Pulley Test is a 1D MMD [p. 4~\citep{rustamov2019closed}]}] \label{lemma:ep_is_mmd}
Let $U$ and $V$ be scalar random variables with characteristic functions $\varphi_U$ and $\varphi_V$. The Epps-Pulley discrepancy is defined as
\begin{equation}
    \mathrm{EP}(U,V) = \int_{\mathbb R} \bigl|\varphi_U(t)-\varphi_V(t)\bigr|^2\,w(t)\,dt,
\end{equation}
where $w(t) = \frac{1}{\sqrt{2\pi}} \exp\left(-\frac{t^2}{2}\right)$. This formulation is equivalent to a Maximum Mean Discrepancy:
\begin{equation}
    \mathrm{EP}(U,V) = \mathrm{MMD}_{k_1}^2(U,V),
\end{equation}
where $k_1(u,v) = \exp\left(-\frac{(u-v)^2}{2}\right)$ is the standard Gaussian base kernel.
\end{lemma}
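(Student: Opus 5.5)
The approach is to expand the squared modulus inside the EP integral, swap the expectations over the random variables with the integral over $t$ (Fubini), and recognize each resulting term as a Gaussian kernel evaluated through the characteristic function of the standard normal (the Bochner representation of $k_1$).

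\textbf{Step 1 (expansion).} Write $\varphi_U(t)=\mathbb E[e^{itU}]$, $\varphi_V(t)=\mathbb E[e^{itV}]$, and let $U'$, $V'$ be independent copies of $U$, $V$, with all four variables mutually independent. Then
\begin{equation*}
|\varphi_U(t)-\varphi_V(t)|^2=\varphi_U(t)\overline{\varphi_U(t)}+\varphi_V(t)\overline{\varphi_V(t)}-2\,\mathrm{Re}\bigl(\varphi_U(t)\overline{\varphi_V(t)}\bigr),
\end{equation*}
and by independence each product is an expectation of a single exponential, for instance $\varphi_U(t)\overline{\varphi_U(t)}=\mathbb E[e^{it(U-U')}]$ and $\varphi_U(t)\overline{\varphi_V(t)}=\mathbb E[e^{it(U-V)}]$.

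\textbf{Step 2 (Fubini).} Each integrand is bounded in modulus by $w(t)$, which is integrable because $w$ is a probability density. Hence $\int\mathbb E[\cdot]\,w\,dt=\mathbb E\int(\cdot)\,w\,dt$. The real part commutes with both the integral and the expectation.

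\textbf{Step 3 (Bochner / Gaussian characteristic function).} For any real $s$,
\begin{equation*}
\int_{\mathbb R}e^{its}\,w(t)\,dt=e^{-s^2/2}=k_1(s,0),
\end{equation*}
since $w$ is the $\mathcal N(0,1)$ density and its characteristic function is $e^{-s^2/2}$. This is the only computation the proof requires, and it is standard. Because the result is real, the real-part operation is harmless. Applying it to the three terms gives
\begin{equation*}
\mathrm{EP}(U,V)=\mathbb E\,k_1(U,U')+\mathbb E\,k_1(V,V')-2\,\mathbb E\,k_1(U,V),
\end{equation*}
which is exactly the population expression for $\mathrm{MMD}^2_{k_1}(U,V)$.

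The only delicate point is justifying the interchange of integrals, which the boundedness in Step 2 settles. One should also check that the normalization of $w$ produces the kernel with no extra constant; it does, because $w$ integrates to one, so $k_1(u,u)=1$.
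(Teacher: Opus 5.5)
Your proof is correct and rests on the same key fact as the paper's, namely that $w$ is the Bochner spectral density of $k_1$, i.e.\ $\int_{\mathbb R} e^{its}w(t)\,dt=e^{-s^2/2}$. The only difference is the level of detail: the paper simply invokes the known frequency-domain identity $\mathrm{MMD}^2_{k_1}(U,V)=\int_{\mathbb R}|\varphi_U(t)-\varphi_V(t)|^2\,w(t)\,dt$ for translation-invariant kernels, whereas you derive it (expanding the modulus, applying Fubini with the integrable bound $w$, and evaluating the Gaussian characteristic function), which makes your argument self-contained.
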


\begin{proof}
By Bochner's theorem~\citep{rahimi2007random}, a continuous, translation-invariant positive definite kernel $k_1(u,v) = \psi(u-v)$ admits a spectral representation via the inverse Fourier transform of a density $w(t)$:
$$
\psi(\tau) = \int_{\mathbb R} e^{it \tau}\,w(t)\,dt .
$$
Specifically, the standard Gaussian density $w(t) = \frac{1}{\sqrt{2\pi}} \exp(-t^2/2)$ is exactly the inverse Fourier transform of the standard Gaussian base kernel $k_1(u,v) = \exp\left(-\frac{(u-v)^2}{2}\right)$. 
For such translation-invariant kernels, the squared MMD between the distributions of $U$ and $V$ can be evaluated directly in the frequency domain as the weighted $L^2$ distance between their characteristic functions:
$$
\mathrm{MMD}_{k_1}^2(U,V) = \int_{\mathbb R} \bigl|\varphi_U(t)-\varphi_V(t)\bigr|^2\,w(t)\,dt .
$$
This frequency-domain formulation natively matches the exact definition of the EP test, concluding the proof.
\end{proof}

\begin{lemma}[Expected Sliced MMD is a Full-Dimensional MMD [p. 4, Eq. (4)-(6)\citep{kolouri2022generalized}] \label{lemma:smmd_is_full_mmd}
Let $X$ and $Y$ be random variables in $\mathbb R^d$, and let $k_1$ be a continuous positive definite kernel on $\mathbb R$. Define the integrated kernel on $\mathbb R^d \times \mathbb R^d$ as:
$$
\bar k(x,y) = \mathbb E_{a\sim\mathrm{Unif}(\mathbb S^{d-1})} \big[ k_1(a^\top x,a^\top y) \big].
$$
Then $\bar k$ is a valid positive definite kernel on $\mathbb R^d$, and the expected 1D MMD over random uniform projections is identically equal to the MMD induced by $\bar k$:
$$
\mathbb E_{a\sim\mathrm{Unif}(\mathbb S^{d-1})} \!\left[ \mathrm{MMD}_{k_1}^2(a^\top X, a^\top Y) \right] = \mathrm{MMD}_{\bar k}^2(X,Y).
$$
\end{lemma}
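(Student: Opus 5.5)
The plan is to prove positive definiteness of $\bar k$ directly, and then to obtain the MMD identity by expanding both sides into kernel expectations and exchanging the order of integration with Fubini's theorem.

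\textbf{Positive definiteness.} Take any points $x_1,\dots,x_n\in\mathbb R^d$ and coefficients $c_1,\dots,c_n\in\mathbb R$. By linearity of expectation,
\[
\sum_{i,j} c_i c_j \bar k(x_i,x_j) = \mathbb E_{a}\Big[\sum_{i,j} c_i c_j k_1(a^\top x_i, a^\top x_j)\Big].
\]
For each fixed $a$, the integrand is a quadratic form of $k_1$ at the scalar points $a^\top x_i$, so it is nonnegative. Hence the expectation is nonnegative.

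\textbf{Measurability and integrability.} Continuity of $k_1$ and of $a\mapsto a^\top x$ makes the integrand jointly measurable. I will assume $k_1$ is bounded, as is the Gaussian kernel of Lemma~\ref{lemma:ep_is_mmd}, where $|k_1|\le 1$. Under this assumption every expectation below is absolutely convergent. For a general continuous $k_1$, I would impose the integrability condition $\mathbb E|k_1(a^\top X,a^\top X')|<\infty$ and the analogous conditions for the other pairs.

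\textbf{Expanding both sides.} Let $X'$ and $Y'$ be independent copies of $X$ and $Y$, drawn independently of $a$. For fixed $a$, the standard MMD expansion gives
\[
\mathrm{MMD}_{k_1}^2(a^\top X,a^\top Y)=\mathbb E_{X,X'}[k_1(a^\top X,a^\top X')]+\mathbb E_{Y,Y'}[k_1(a^\top Y,a^\top Y')]-2\,\mathbb E_{X,Y}[k_1(a^\top X,a^\top Y)].
\]

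\textbf{Exchanging the order of integration.} Take $\mathbb E_a$ of this expansion. Since the integrands are bounded on the product probability space of $(a,X,X',Y,Y')$, Fubini's theorem lets me swap $\mathbb E_a$ with the data expectations in each of the three terms separately. For example, the first term becomes
\[
\mathbb E_{X,X'}\big[\mathbb E_a[k_1(a^\top X,a^\top X')]\big]=\mathbb E_{X,X'}[\bar k(X,X')].
\]
Treating the other two terms the same way reproduces exactly the expansion of $\mathrm{MMD}_{\bar k}^2(X,Y)$, which proves the identity.

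\textbf{Main obstacle.} The only delicate step is justifying Fubini, which requires absolute integrability of each term on the product space. Boundedness of $k_1$ resolves this immediately. For unbounded kernels, I would instead state the result under a moment condition, so that the three terms are not split into an $\infty-\infty$ form.
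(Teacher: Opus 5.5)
Your proposal is correct and follows the same route as the paper: you average the pointwise positive semidefiniteness of $k_1(a^\top\cdot,a^\top\cdot)$ over $a$, then expand the 1D MMD and swap $\mathbb E_a$ with the data expectations via Fubini. Your explicit boundedness (or moment) hypothesis on $k_1$ makes precise the integrability condition that the paper leaves implicit when it calls the exchange ``safe''.
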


\begin{remark}
It is worth noting that the integrated kernel $\bar{k}$ on $\mathbb{R}^d$ inherits not just the positive-definiteness of the base kernel $k_1$, but also its characteristic property. It is an easy matter to verify that $\bar{k}$ is characteristic when $k_1$ is directly, using the Cramér-Wold theorem; it follows also as a special case of [p. 4, Proposition 1~\citep{nadjahi2020statistical}]. Thus, the induced full-dimensional MMD divergence remains a (definite) metric on probability measures if $k_1$ is.
\end{remark}

\begin{proof}
Since $k_1$ is positive definite on $\mathbb R$, for any fixed direction $a \in \mathbb S^{d-1}$, the function $k_a(x,y) := k_1(a^\top x,a^\top y)$ is a positive definite kernel on $\mathbb R^d$.
Indeed, for any $x_1,\dots,x_n \in \mathbb R^d$ and coefficients $c_1,\dots,c_n \in \mathbb R$,
$$
\sum_{i,j} c_i c_j k_a(x_i,x_j) = \sum_{i,j} c_i c_j k_1(a^\top x_i,a^\top x_j) \ge 0.
$$
Because the expectation of positive definite kernels remains positive definite, the integrated kernel $\bar k(x,y)=\mathbb E_a[k_a(x,y)]$ is a valid positive definite kernel on $\mathbb{R}^d$.

To prove the equality of the discrepancies, we expand the expected one-dimensional MMD:
$$
\begin{aligned}
\mathbb E_a \!\left[ \mathrm{MMD}_{k_1}^2(a^\top X, a^\top Y) \right] 
&= \mathbb E_a\!\Big[ \mathbb E_{X,X'} [k_1(a^\top X,a^\top X')] + \mathbb E_{Y,Y'} [k_1(a^\top Y,a^\top Y')] \\
&\qquad\qquad - 2\,\mathbb E_{X,Y} [k_1(a^\top X,a^\top Y)] \Big].
\end{aligned}
$$
By Fubini's theorem, we can safely exchange the expectations over the random directions $a$ and the random variables $X, Y$:
$$
\begin{aligned}
\mathbb E_a \!\left[ \mathrm{MMD}_{k_1}^2(a^\top X, a^\top Y) \right] 
&= \mathbb E_{X,X'} \big[\mathbb E_a[k_1(a^\top X,a^\top X')]\big] + \mathbb E_{Y,Y'} \big[\mathbb E_a[k_1(a^\top Y,a^\top Y')]\big] \\
&\quad - 2\,\mathbb E_{X,Y} \big[\mathbb E_a[k_1(a^\top X,a^\top Y)]\big].
\end{aligned}
$$
Substituting the definition of the integrated kernel $\bar k$ into the expectations yields exactly the full-dimensional formulation $\mathrm{MMD}_{\bar k}^2(X,Y)$.
\end{proof}

\paragraph{Derivation of the Explicit Integral Form.} 
By sequentially applying Lemma~\ref{lemma:ep_is_mmd} and Lemma~\ref{lemma:smmd_is_full_mmd}, we have established that the expected sliced objective resolves exactly to $\mathbb E_{a} [ \mathrm{EP}(a^\top X, a^\top Y) ] = \mathrm{MMD}_{\bar k}^2(X, Y)$. To compute this deterministically in practice, we must derive the explicit form of the induced kernel $\bar{k}(x,y)$.

For unit vectors $x,y \in \mathbb{S}^{d-1}$, the standard Gaussian base kernel evaluates to:
$$
k_1(a^\top x, a^\top y) = \exp\left(-\frac{(a^\top(x-y))^2}{2}\right).
$$
Let $c = x^\top y$ denote the cosine similarity. On the unit sphere, the distance between embeddings is  $\|x-y\| = \sqrt{2(1-c)}$. 
We can therefore factorize the random projection along the unit direction $u = \frac{x-y}{\|x-y\|}$ as:
$$
a^\top(x-y) = \|x-y\| (a^\top u) = \sqrt{2(1-c)} \cdot t,
$$
where $t = a^\top u$. Since the vector $a$ is drawn uniformly from $\mathbb{S}^{d-1}$ and $u$ is fixed, the scalar projection $t \in [-1, 1]$ natively follows the hyperspherical marginal distribution, whose probability density is:
$$
\rho_d(t) = \frac{\Gamma(d/2)}{\sqrt{\pi}\Gamma((d-1)/2)} (1-t^2)^{\frac{d-3}{2}}.
$$
Substituting this geometric projection back into the kernel expectation, the multidimensional integration over the random directions $a$ simplifies into a single 1D integral. The exponent simplifies cleanly since $\frac{1}{2}(\sqrt{2(1-c)}t)^2 = (1-c)t^2$, yielding:
$$
\bar{k}(x, y) = \int_{-1}^{1} \exp\left(-(1-c) t^2\right) \rho_d(t) \, dt.
$$
This derivation confirms that the induced kernel is strictly rotationally invariant---depending solely on the cosine similarity $c$---and matches the explicit equation presented in Section~\ref{sec:suboptimality}. Furthermore, this form naturally justifies the use of Gauss-Jacobi quadrature for fast, deterministic, and fully differentiable evaluations.

\section{Kernel Stein Discrepancy on the Hypersphere} \label{app:ksd_sphere}

In this appendix, we derive a closed-form expression of the Kernel Stein
Discrepancy (KSD) for the uniform distribution on the hypersphere
$\mathbb S^{d-1}$, yielding the explicit objective $D_{\mathrm{KSD}}$ presented in Section~\ref{sec:explicitloss}.

While the theoretical properties and consistency of this generalized Riemannian KSD have been established in  \cite{qu2025theoryapplicationskernelsteins}, we provide here a self-contained derivation explicitly tailored to our uniformity test.

\paragraph{The Riemann-Stein Framework.}
Building upon the geometric Stein kernel framework \citep{barp2022riemannstein} and its directional formulation \citep{xu2020stein}, let $\mathcal M$ be a Riemannian manifold and $q$ a target density. 
The main idea behind the Stein method is to find an operator $\mathcal{A}$ such
that $\mathbb{E}_{x \sim q}[\mathcal{A}f(x)] = 0$ for any suitable test
function $f$. By Stokes theorem, on a compact manifold without boundary, the
integral of the divergence of any vector field is zero. To
construct our operator~$\mathcal{A}_X$, we consider the density-weighted vector field $Y =
f \cdot q \cdot X$, where $f$ is a scalar test function and $X$ is a fixed
vector field. Expanding its divergence via the standard product rule yields:
\begin{align*}
\mathrm{div}(f q X) 
&= \langle \nabla(f q), X \rangle + f q \, \mathrm{div}(X) \\
&= q \langle \nabla f, X \rangle + f \langle \nabla q, X \rangle + f q \, \mathrm{div}(X).
\end{align*}
Factoring out $q$ and using the identity $\nabla q = q \nabla \log q$, we obtain $\mathrm{div}(f q X) = q \mathcal{A}_X f$, which defines the general Stein operator acting on $f$ along $X$:
\begin{equation}
\mathcal{A}_X f(x) = \langle X(x), \nabla f(x) \rangle + \big(\mathrm{div}(X)(x) + \langle X(x), \nabla \log q(x) \rangle\big)f(x).
\label{eq:general_stein_operator}
\end{equation}
Since $\int_{\mathcal M} \mathrm{div}(f q X) d\mu = 0$, it immediately follows that the expectation under $q$ is zero: $\mathbb E_q[\mathcal{A}_X f] = 0$.
\paragraph{Strategic Choice of Vector Field.}
The general Stein operator defined in \eqref{eq:general_stein_operator} is
valid for any smooth vector field $X$.
However, evaluating this operator depends on computing the divergence
$\mathrm{div}(X)$, which often leads to numerically unstable expressions (such
as Jacobian derivatives in local spherical coordinates).

To circumvent this issue, we will choose $X$ to be a divergence-free vector
field, more specifically a Killing vector field or infinitessimal isometry.

For the hypersphere $\mathbb S^{d-1}$, these isometries are generated by the
Lie algebra $\mathfrak{so}(d)$ consisting of all skew-symmetric matrices of
size~$d \times d$. We adopt its canonical orthonormal basis, given by the
matrices $E_{ij}$ for $1 \le i < j \le d$:
\[
E_{ij} = \frac{1}{\sqrt{2}}\left(e_ie_j^\top - e_je_i^\top\right) \in \mathfrak{so}(d),
\]
/here $e_i$ is the $i$-th canonical basis vector of $\mathbb{R}^d$. 
Applying these matrices to a point $x \in \mathbb S^{d-1}$ yields a  basis of Killing vector fields. 
By setting our generic field to $X(x) = E_{ij}x$, the divergence term in \eqref{eq:general_stein_operator} vanishes. 

Furthermore, since our target distribution $q$ is the uniform density on $\mathbb S^{d-1}$, its log-derivative vanishes ($\nabla \log q \equiv 0$). Equation~\eqref{eq:general_stein_operator} therefore reduces to:
\begin{equation}
\mathcal{A}_{ij} f(x) = (E_{ij} x)^\top \nabla_x f(x).
\end{equation}

\paragraph{Constructing the Stein Kernel.}
The mechanism of the Kernel Stein Discrepancy established by \cite{chwialkowski2016kernel}, evaluates the discrepancy by taking the supremum over a unit ball in a Reproducing Kernel Hilbert Space (RKHS).
Thanks to the reproducing property, this supremum evaluates in closed form to
the expectation of a symmetric Stein kernel $k_q(x,y)$.

This Stein kernel is constructed by applying the Stein operator to a base
scalar reproducing kernel $k(x,y)$ on both variables. Because this directional
approach utilizes a set of independent operators---one for each 
skew-symmetric matrix
in the Lie algebra $\mathfrak{so}(d)$---the overall Stein kernel on the hypersphere is formed by applying the double operator along each valid direction and summing the results over the entire basis:
\begin{equation}
k_q(x,y) = \sum_{1 \le i < j \le d} \big(\mathcal{A}_{ij}^x \mathcal{A}_{ij}^y k\big)(x,y).
\label{eq:kp_sum_def}
\end{equation}
Let us expand the inner term. First, applying the operator with respect to $y$ yields a row vector:
\[
\mathcal{A}_{ij}^y k(x,y) = (E_{ij} y)^\top \nabla_y k(x,y) = \nabla_y^\top k(x,y) (E_{ij} y).
\]
Next, applying the operator with respect to $x$ to this result requires the Hessian of the kernel, obtained via standard multivariable
calculus:

\begin{equation}
\big(\mathcal{A}_{ij}^x \mathcal{A}_{ij}^y k\big)(x,y) = (E_{ij} x)^\top \Big[\nabla_x \big(\nabla_y^\top k(x,y)\big)\Big] (E_{ij} y).
\label{eq:hessian_inner}
\end{equation}

\paragraph{Specialization to Radial Kernels.}
We now specialize to radial (zonal) kernels on the sphere, which only depend on the inner product $c := x^\top y$.  Let $k(x,y)=\phi(c)$ for $x,y\in \mathbb S^{d-1}$, where $\phi:[-1,1]\to\mathbb R$ is twice differentiable. 

Using this notation, we have~$\phi'(c)=\nabla_y k(x,y)x$.

Differentiating with respect to $x$  yields the Hessian:
\begin{equation}
\nabla_x \nabla_y^\top k(x,y) = \phi''(c) y x^\top + \phi'(c) I.
\label{eq:mixed_hessian_phi}
\end{equation}

\paragraph{Evaluating the Sum over the Lie Algebra.}
Substituting \eqref{eq:mixed_hessian_phi} back into \eqref{eq:hessian_inner}, the term for a single basis vector becomes:
\begin{align}
\big(\mathcal{A}_{ij}^x \mathcal{A}_{ij}^y k\big)(x,y) 
&= (E_{ij} x)^\top \big(\phi''(c) y x^\top + \phi'(c) I\big) (E_{ij} y) \nonumber \\
&= \phi''(c) \big((E_{ij}x)^\top y\big) \big(x^\top E_{ij} y\big) + \phi'(c) \big((E_{ij}x)^\top (E_{ij} y)\big).
\label{eq:split_phi_terms}
\end{align}
To compute the sum over $1 \le i < j \le d$, an expansion using
linear algebra yields the matrix $M$:
\begin{equation}
M(x,y) := \sum_{1 \le i < j \le d} (E_{ij}x)(E_{ij}y)^\top = \frac{1}{2}\left(c\,I - yx^\top\right).
\label{eq:killing_identity}
\end{equation}

We can now evaluate the two terms in \eqref{eq:split_phi_terms} using $M$:

\noindent \textbf{The $\phi''(c)$ term (Quadratic Form):}
Since $E_{ij}$ is skew-symmetric ($E_{ij}^\top = -E_{ij}$), we can manipulate the transposes: $(E_{ij}x)^\top y = - x^\top E_{ij} y = y^\top E_{ij} x$. Moreover, the scalar $x^\top E_{ij} y$ equals its transpose $-y^\top E_{ij} x$. The product thus becomes:
\[
\big((E_{ij}x)^\top y\big) \big(x^\top E_{ij} y\big) = (y^\top E_{ij} x)(-y^\top E_{ij} x) = -(y^\top E_{ij} x)^2.
\]

Summing this quadratic expression is equivalent to evaluating the form
$y^\top M x$ on the matrix $M$ defined in \eqref{eq:killing_identity}, because
$\sum_{i<j}(y^\top E_{ij} x)(y^\top E_{ij}^\top x) = y^\top M x$. Evaluating
this explicitly yields:
\begin{equation}
y^\top M x = y^\top \left( \frac{1}{2}(c\,I - yx^\top) \right) x = \frac{1}{2}\big(c (y^\top x) - (y^\top y)(x^\top x)\big)= \frac{1}{2}(c^2 - 1).
\end{equation}


\noindent \textbf{The $\phi'(c)$ term (Trace):} 
The dot product of two vectors $u^\top v$ is equal to the trace of their outer product $\mathrm{Tr}(vu^\top)$. Therefore:
\[
\sum_{1 \le i < j \le d} (E_{ij}x)^\top (E_{ij}y) = \sum_{1 \le i < j \le d} \mathrm{Tr}\big((E_{ij}y)(E_{ij}x)^\top\big) = \mathrm{Tr}(M).
\]
Taking the trace of $M$ from \eqref{eq:killing_identity}, $\mathrm{Tr}(I)=d$ and $\mathrm{Tr}(yx^\top) = x^\top y = c$, yields:
\begin{equation}
\mathrm{Tr}(M) = \mathrm{Tr}\left(\frac{1}{2}(c\,I - yx^\top)\right) = \frac{1}{2}(c d - c) = \frac{c(d-1)}{2}.
\end{equation}

\paragraph{Closed Form.}
Combining these two evaluated sums, we obtain our remarkably simple closed form for the Stein kernel of the uniform distribution on $\mathbb S^{d-1}$, depending on the first two derivatives of the radial kernel $\phi$:
\begin{equation}
\label{eq:kp_radial_closed_form_phi}
k_q(x,y)
=
\frac{1}{2}
\Big[
(c^2-1)\phi''(c)
+
c(d-1)\phi'(c)
\Big],
\qquad c=x^\top y.
\end{equation}

\paragraph{Normalization for Representation Learning.}
Substituting this closed-form Stein kernel back into the expectation over the empirical distribution $\hat p$ yields our raw KSD objective. 
As established in Section~\ref{sec:explicitloss}, we scale this discrepancy such that a worst-case representation collapse evaluates exactly to $1$. In the event of collapse ($x=y$, yielding $c=1$), the term $(c^2-1)\phi''(c)$ vanishes, and the expectation evaluates to exactly $\frac{d-1}{2}\phi'(1)$. 

By defining the normalization constant $C_{\mathrm{norm/KSD}} = \frac{d-1}{2}\phi'(1)$, we scale the objective such that a complete point collapse ($c=1$) evaluates to exactly $1$, yielding the final explicit form $D_{\mathrm{KSD}}$ presented in Section~\ref{sec:explicitloss}:\begin{equation}
D_{\mathrm{KSD}}
=
\frac{1}{C_{\mathrm{norm/KSD}}} \mathbb E_{x,y\sim \hat p}
\!\left[
\frac12\Big((c^2-1)\phi''(c)
+
c(d-1)\phi'(c)\Big)
\right].
\end{equation}

\section{Maximum Mean Discrepancy on the Hypersphere}
\label{app:mmd_sphere}

In this appendix, we detail the derivation of the Maximum Mean Discrepancy (MMD)~\citep{MMD} on $\mathbb S^{d-1}$ and specialize it to the rotationally invariant zonal kernels introduced in Section~\ref{sec:kernel_choice}, yielding the explicit objective presented in Section~\ref{sec:explicitloss}.

\paragraph{Maximum Mean Discrepancy on a Manifold.}
Let $\mu$ be the Haar measure on $\mathbb S^{d-1}$, and let
$$k:\mathbb S^{d-1} \times \mathbb S^{d-1} \to\mathbb R$$
be a positive definite kernel with an associated reproducing kernel Hilbert space (RKHS) $\mathcal H$. For two probability densities $p$ and $q$ on $\mathbb S^{d-1}$ (with respect to $\mu$), the squared Maximum Mean Discrepancy is defined as the distance between their mean embeddings:
$$\mathrm{MMD}^2(p,q) = \|\mu_p-\mu_q\|_{\mathcal H}^2,$$
where the mean embedding of a distribution $r$ is given by $\mu_r = \int_{\mathbb S^{d-1}} k(\cdot,x)\, r(x)\,\mu(dx)$.

Expanding the RKHS norm yields the classical expression:
\begin{equation}
\label{eq:mmd_general}
\mathrm{MMD}^2(p,q) = \mathbb E_{x,x'\sim p}[k(x,x')] + \mathbb E_{y,y'\sim q}[k(y,y')] - 2\,\mathbb E_{x\sim p,y\sim q}[k(x,y)].
\end{equation}

\paragraph{Zonal Kernels and the Uniform Target.}
Following Section~\ref{sec:stats_tests}, we evaluate the discrepancy between the empirical distribution of the embeddings, $p = \hat p$, and the uniform target distribution, $q = \mathrm{Unif}(\mathbb S^{d-1})$. Furthermore, as established in Section~\ref{sec:kernel_choice}, we restrict our focus to rotationally invariant zonal kernels of the form:
$$k(x,y)=\varphi(c), \qquad \text{where} \quad c := x^\top y \in [-1, 1].$$

Substituting these into Equation~\eqref{eq:mmd_general}, the empirical MMD becomes:
\begin{equation}
\label{eq:mmd_sphere}
\mathrm{MMD}^2(\hat p,q) = \mathbb E_{x,x'\sim \hat p}\!\big[\varphi(x^\top x')\big] + \mathbb E_{y,y'\sim q}\!\big[\varphi(y^\top y')\big] - 2\,\mathbb E_{x\sim \hat p,y\sim q}\!\big[\varphi(x^\top y)\big].
\end{equation}

Because $q$ is uniform, by rotational invariance, if $y\sim q$ and $x\in\mathbb S^{d-1}$ is a fixed vector, the inner product $v = x^\top y$ follows a known distribution with density:
$$\rho_d(v) = \frac{(1-v^2)^{\frac{d-3}{2}}}{B\!\left(\tfrac12,\tfrac{d-1}{2}\right)}, \qquad v\in[-1,1],$$
where $B(\cdot,\cdot)$ denotes the Beta function. Consequently, expectations involving the uniform target reduce to a tractable one-dimensional integral over $[-1, 1]$. We define this analytic bias constant as:
$$C_{\mathrm{bias/MMD}} := \int_{-1}^1 \varphi(v)\rho_d(v)dv.$$

Because both the target-target expectation and the cross-term expectation integrate over the uniform measure $q$, they simplify identically to this constant:
\begin{align*}
\mathbb E_{y,y'\sim q}\!\big[\varphi(y^\top y')\big] &= C_{\mathrm{bias/MMD}}, \\
\mathbb E_{x\sim \hat p,y\sim q}\!\big[\varphi(x^\top y)\big] &= C_{\mathrm{bias/MMD}}.
\end{align*}

Substituting these identities back into Equation~\eqref{eq:mmd_sphere} yields the unnormalized objective:
\begin{equation}
\label{eq:mmd_unnormalized}
\mathrm{MMD}^2(\hat p,q) = \mathbb E_{x,x'\sim \hat p}\!\big[\varphi(x^\top x')\big] - C_{\mathrm{bias/MMD}}.
\end{equation}

\paragraph{Normalization for Representation Learning.}
To ensure comparable gradients and bounded objectives across different kernels and dimensionalities, we scale this raw discrepancy to obtain a worst-case collapse value of exactly $1$. Total representation collapse occurs when all embeddings are mapped to the same point on the hypersphere (i.e., $x = x'$, yielding $x^\top x' = 1$). 

Since our kernels are systematically normalized such that $\varphi(1) = 1$, the maximum possible value of the unnormalized MMD is $1 - C_{\mathrm{bias/MMD}}$. By defining the normalization constant $C_{\mathrm{norm/MMD}} = 1 - C_{\mathrm{bias/MMD}}$, we obtain the final explicit regularization objective $D_{\mathrm{MMD}}$ presented in Section~\ref{sec:explicitloss}:
\begin{equation}
D_{\mathrm{MMD}} = \frac{1}{C_{\mathrm{norm/MMD}}} \left( \mathbb E_{x,x'\sim \hat p}[\varphi(c)] - C_{\mathrm{bias/MMD}} \right).
\end{equation}
\section{Kullback--Leibler Divergence on the Hypersphere}
\label{app:kl_sphere}

In this appendix, we detail the derivation of the Kullback--Leibler (KL) divergence on the hypersphere $\mathbb S^{d-1}$ and show how a Kernel Density Estimation (KDE) approach leads to the explicit, closed-form objective $D_{\mathrm{KL}}$ presented in Section~\ref{sec:explicitloss}.

\paragraph{Kullback--Leibler Divergence on a Manifold.}
Let $\mu$ be the Haar measure on $\mathbb S^{d-1}$. For two probability densities $p$ and $q$ on $\mathbb S^{d-1}$ (with respect to $\mu$), the Kullback--Leibler divergence is defined as:
\begin{equation}
\mathrm{KL}(p\|q)
=
\int_{\mathbb S^{d-1}}
p(x)\log\frac{p(x)}{q(x)}\,\mu(dx).
\end{equation}
Equivalently, writing the expectation with respect to $p$, we have:
\begin{equation}
\label{eq:kl_general}
\mathrm{KL}(p\|q)
=
\mathbb E_{x\sim p}\!\left[
\log p(x) - \log q(x)
\right].
\end{equation}

\paragraph{Uniform Reference Distribution.}
We now specialize to the case where the target reference distribution is the uniform measure on the hypersphere, $q = \mathrm{Unif}(\mathbb S^{d-1})$. The density of the uniform distribution with respect to the Haar measure is constant and equal to $q(x) = 1 / |\mathbb S^{d-1}|$, where
\[
|\mathbb S^{d-1}|
=
\frac{2\pi^{d/2}}{\Gamma(d/2)}
\]
denotes the surface area of the hypersphere. Substituting this constant density into Equation~\eqref{eq:kl_general} yields:
\begin{equation}
\label{eq:kl_uniform_sphere}
\mathrm{KL}(p\|q)
=
\mathbb E_{x\sim p}\!\big[\log p(x)\big]
+
\log |\mathbb S^{d-1}|.
\end{equation}
This establishes that minimizing the KL divergence to the uniform distribution is fundamentally equivalent to maximizing the differential entropy of $p$. The geometric constant governing this shift is defined in our formulation as $C_{\mathrm{bias/KL}} = \log |\mathbb S^{d-1}|$.

\paragraph{Sample-Based Approximations and Gradient Stability.}
In practical representation learning, we only have access to a discrete empirical distribution $\hat{p}$ derived from a finite minibatch of embeddings. Because the exact differential entropy $\mathbb E_{x\sim \hat p}[\log \hat p(x)]$ diverges for a discrete sum of Dirac deltas, we must rely on continuous, sample-based approximations.

While some existing methods (such as the KoLeo estimator~\citep{koleo}) approximate this entropy using nearest-neighbor distances, we explicitly discard this approach. The reliance on a hard $\min$ operator to identify the closest neighbor inherently induces highly non-smooth, unstable gradients during backpropagation, severely destabilizing optimization. 

Instead, to obtain a robust, smoothly differentiable, and full-dimensional statistical test that seamlessly integrates with our rotationally invariant kernels $k(x,x')=\varphi(c)$, we construct a continuous surrogate for the empirical density using Kernel Density Estimation (KDE). 
To prevent trivial self-similarity singularities—where the density estimate at $x$ is entirely dominated by the kernel evaluating its own embedding ($x^\top x = 1$)—we strictly evaluate this density using a leave-one-out estimator, denoted $\hat p_{-x}$:
\[
\tilde p(x) = \mathbb E_{x'\sim \hat p_{-x}}\big[\varphi(x^\top x')\big].
\]
Replacing the exact density $p(x)$ in Equation~\eqref{eq:kl_uniform_sphere} with this leave-one-out surrogate yields our unnormalized KDE-based objective.

\paragraph{Normalization for Representation Learning.}
As established in Section~\ref{sec:explicitloss}, we systematically scale our discrepancies such that a worst-case representation collapse evaluates exactly to $1$. In the event of total collapse, all normalized embeddings map to the exact same coordinate ($x = x'$, yielding $c = 1$). Because our zonal kernels are calibrated such that $\varphi(1) = 1$, the leave-one-out density estimate $\tilde p(x)$ equals $1$, and its logarithm evaluates to exactly $0$. 

To map this boundary condition consistently across all tests, we apply the scaling constant $C_{\mathrm{norm/KL}}$ alongside the analytic bias $C_{\mathrm{bias/KL}}$ defined above. This yields the final explicit, normalized objective $D_{\mathrm{KL}}$ as implemented in Section~\ref{sec:explicitloss}:
\begin{equation}
D_{\mathrm{KL}}
=
\frac{1}{C_{\mathrm{norm/KL}}} \left( \mathbb E_{x\sim \hat p}
\left[
\log
\mathbb E_{x'\sim \hat p_{-x}}
[\varphi(c)]
\right]
- 
C_{\mathrm{bias/KL}} \right).
\end{equation}

\section{Ablation of the Temperature for the Heat Kernel}
\label{app:temperature_ablation}

The regularizing behavior of the heat kernel inherently depends on its scale parameter $t$, which dictates the exponential decay of the spectral weights $w(\lambda_\ell) = e^{-t\lambda_\ell}$. 
On the hypersphere $\mathbb{S}^{d-1}$, the natural baseline for this time step scales inversely with the dimension, establishing $1/d$ as a unit.

To determine a good smoothing factor, we focus our ablation on the critical region where the kernel's spatial profile exhibits the most significant structural variations.
As visualized in Figure~\ref{fig:kernels}, this corresponds to the temperature range of $t \in \{4/d, 5/d, 6/d\}$. 
It is worth noting that for density-based evaluations---such as the KDE approximation used in the KL divergence---the requirements are different.
In such cases, a much smaller temperature (strictly evaluated at $2/d$) is necessary to ensure the density approximation remains theoretically valid.

The empirical sensitivity of the MMD and KSD objectives to the temperature parameter is detailed in Table~\ref{tab:imagenet100_heat_temperature_eval} for ImageNet-100. 
An observation is the optimization instability of the KSD objective at $t = 4/d$, which can lead to representation collapse in some runs.
This collapse explains the large variance across random seeds (yielding a standard deviation of $\pm 35.20\%$ for linear probing) observed on ImageNet-100.

\begin{table}[t]
\centering
\caption{ImageNet-100 heat-kernel temperature sensitivity. Linear probing and $k$-NN classification accuracy (\%) are reported as mean $\pm$ sample standard deviation over seeds.}
\label{tab:imagenet100_heat_temperature_eval}
\begin{tabular}{lcccc}
\toprule
\multirow{2}{*}{Temperature} & \multicolumn{2}{c}{MMD (Heat)} & \multicolumn{2}{c}{KSD (Heat)} \\
\cmidrule(lr){2-3} \cmidrule(lr){4-5}
& Linear & $k$-NN & Linear & $k$-NN \\
\midrule
$4/d$ & $72.25 \pm 0.70$ & $66.48 \pm 0.21$ & $53.80 \pm 35.20$ & $49.93 \pm 32.62$ \\
$5/d$ & $72.55 \pm 0.29$ & $66.71 \pm 0.31$ & $72.55 \pm 0.29$ & $66.71 \pm 0.31$ \\
$6/d$ & $71.73 \pm 0.49$ & $66.64 \pm 0.60$ & $72.44 \pm 0.30$ & $67.27 \pm 0.42$ \\
\bottomrule
\end{tabular}
\end{table}

\section{Texture Retrieval Evaluation} \label{app:retrieval_results}

Following the experimental framework established by SPHERE-JEPA~\citep{SPHEREJEPA}, we evaluate our method on their nonparametric texture retrieval task. This task is specifically designed to assess the geometry of learned representations: given a query image, the objective is to retrieve another view of the same texture instance among visually similar candidate samples. 

We adopt their exact evaluation protocol, procedural datasets, and data augmentation pipeline, which we briefly summarize below.

\subsection{Datasets}
\label{subsec:texture_setup}

We use the four procedural texture datasets introduced in~\cite{SPHEREJEPA}: \textit{Disk}, \textit{Cloud}, \textit{Flake}, and \textit{Wood}. As originally described by the authors, each dataset is generated from a distinct stochastic process (e.g., heavy-tailed noise, Brownian motion, Perlin noise). 

As illustrated in Figure~\ref{fig:textures}, samples within a given dataset are generated using different random seeds. This yields images that share a similar statistical structure while remaining visually distinct. Following their standard setup, each texture family consists of $10{,}000$ training, $500$ validation, and $10{,}000$ test samples.

\begin{figure}
  \centering
  \includegraphics[width=0.32\linewidth]{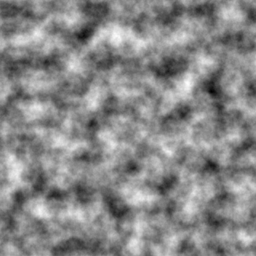}
  \includegraphics[width=0.32\linewidth]{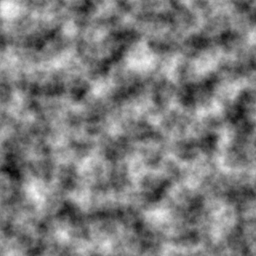}
  \includegraphics[width=0.32\linewidth]{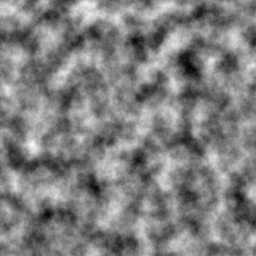}

  \medskip

  \includegraphics[width=0.32\linewidth]{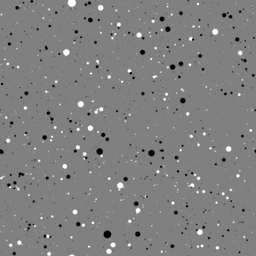}
  \includegraphics[width=0.32\linewidth]{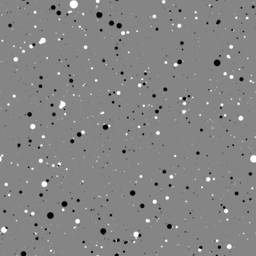}
  \includegraphics[width=0.32\linewidth]{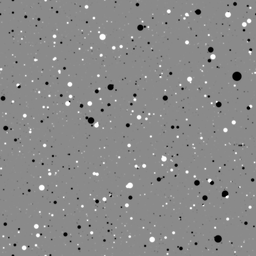}

  \medskip

  \includegraphics[width=0.32\linewidth]{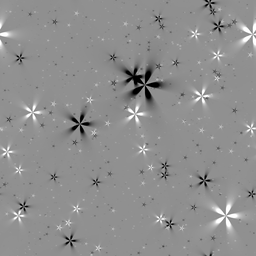}
  \includegraphics[width=0.32\linewidth]{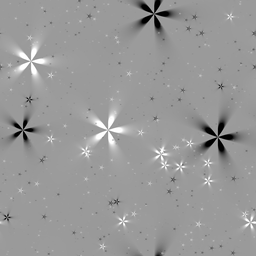}
  \includegraphics[width=0.32\linewidth]{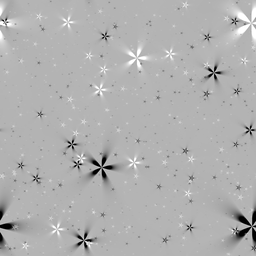}

  \medskip

  \includegraphics[width=0.32\linewidth]{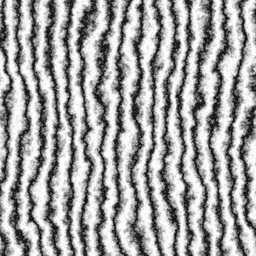}
  \includegraphics[width=0.32\linewidth]{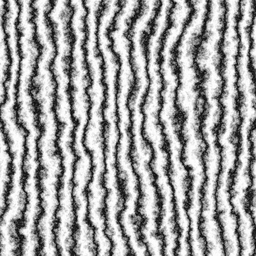}
  \includegraphics[width=0.32\linewidth]{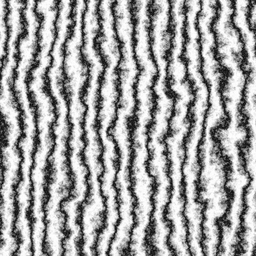}

  \caption{Examples of procedural textures used for retrieval evaluation, as introduced in~\cite{SPHEREJEPA}. Each row corresponds to a texture family (top to bottom: cloud, disk, flake, wood). Images within a row are generated from the same stochastic process with different random seeds, resulting in strong statistical similarity but distinct visual realizations.}
  \label{fig:textures}
\end{figure}

\subsection{Data Augmentation and View Generation}
\label{subsec:crop_texture}

To generate multiple views from each texture image ($V_g = 2$ views per image by default), we directly apply the stochastic spatial and photometric augmentation pipeline proposed in~\cite{SPHEREJEPA}. 

As visualized in Figure~\ref{fig:affine_and_crops}, each view is generated using a random affine transformation (incorporating rotation, translation, scaling, and shear), followed by photometric augmentations such as brightness/contrast adjustments and random erasing. These transformations introduce significant spatial variability while preserving the underlying texture statistics. The exact same augmentation pipeline is applied at test time to ensure consistency.

\begin{figure}
\centering
\begin{minipage}[c]{0.48\linewidth}
    \centering
    \includegraphics[width=\linewidth]{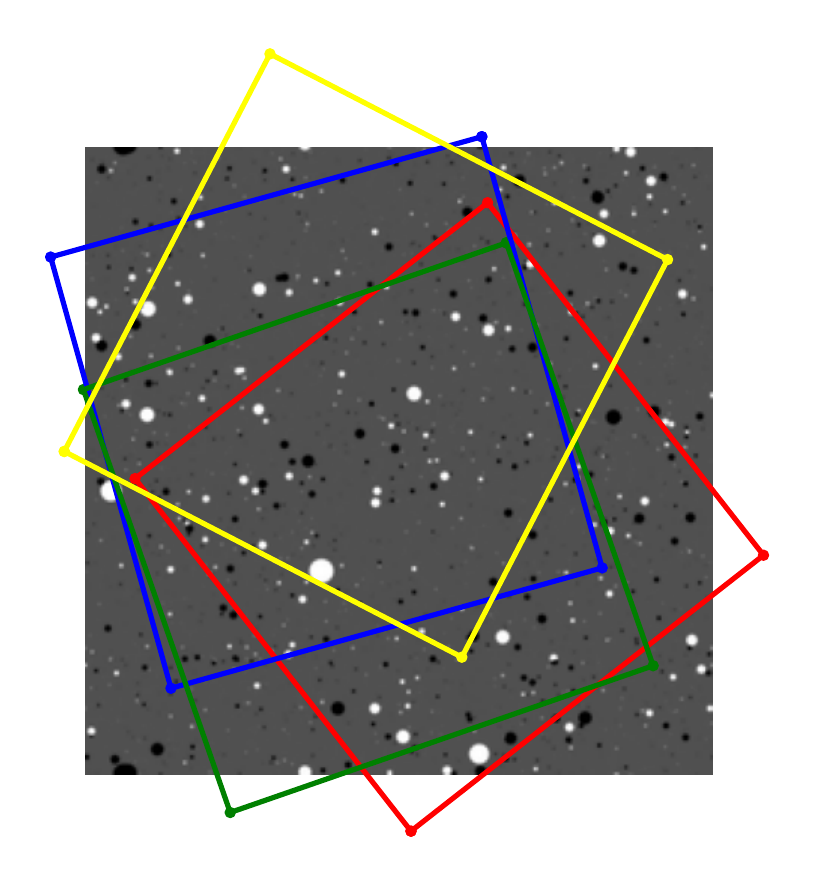}
\end{minipage}
\hfill
\begin{minipage}[c]{0.48\linewidth}
    \centering
    \includegraphics[width=0.49\linewidth]{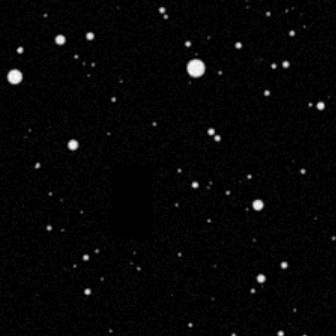}
    \includegraphics[width=0.49\linewidth]{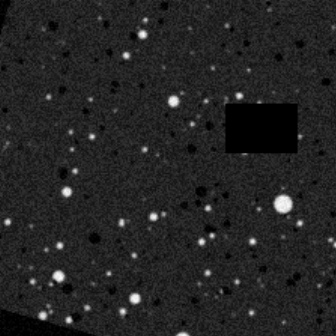}

    \vspace{2mm}

    \includegraphics[width=0.49\linewidth]{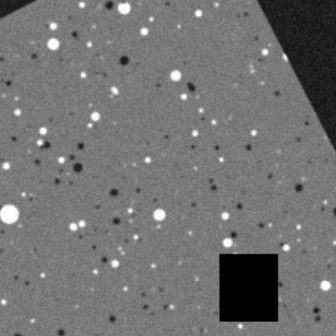}
    \includegraphics[width=0.49\linewidth]{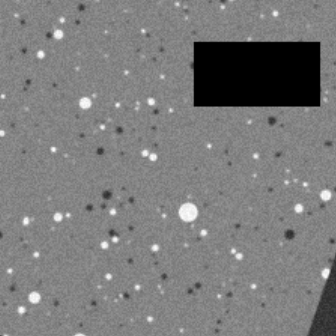}
\end{minipage}

\caption{\textbf{Left:} effective source regions induced by independently sampled random affine transformations.  
\textbf{Right:} corresponding augmented views obtained after applying the complete affine and photometric transformation pipeline from~\cite{SPHEREJEPA}.}
\label{fig:affine_and_crops}
\end{figure}

\subsection{Quantitative Results}

Evaluation is performed within mini-batches of size $B=100$. For each query, similarity is computed against the other samples in the batch, and retrieval performance is measured based on the ranking of these candidates. We report nearest-neighbor retrieval performance using Recall@K ($K \in \{1, 3, 5\}$) and mean Average Precision (mAP). To ensure the regularizers effectively structure the underlying feature space rather than just the projection space, we evaluate both on the projection head outputs (mAP) and directly on the frozen backbone embeddings (mAP emb).

Table~\ref{tab:retrieval_avg} summarizes the average performance across all four texture domains. Tables~\ref{tab:retrieval_disk}, \ref{tab:retrieval_cloud}, \ref{tab:retrieval_wood}, and \ref{tab:retrieval_flake} provide the detailed breakdown for each individual dataset.

\textbf{Superiority of Exact Hyperspherical Uniformity.} As shown in Table~\ref{tab:retrieval_avg}, \textbf{KL (Heat)} consistently and significantly outperforms all other methods on average. The performance gains are particularly striking in the top-1 retrieval metric, where KL (Heat) achieves an average Recall@1 of $95.3\%$, representing a substantial $+6.6$ absolute points improvement over the stochastic SUSReg baseline ($88.7\%$). This dominant trend is consistent across all individual datasets, where KL (Heat) even nearly saturates performance on the easier domains (e.g., $99.4\%$ Recall@1 on Wood).

\textbf{Continuous vs. Stochastic Regularization.} The aggregated results clearly validate our theoretical analysis regarding representation geometry in nonparametric settings. The continuous, full-dimensional regularizers—specifically MMD and KSD utilizing Heat or Bandlimited kernels—consistently yield better instance-level discrimination than both the stochastic baseline (SUSReg) and the induced MMD approach. For instance, replacing the induced MMD with a continuous Heat kernel MMD improves the average Recall@1 by $+2.3$ points (from $89.1\%$ to $91.4\%$). This confirms that continuous structural constraints pave the way for the substantial jump achieved by the exact KL penalty.

\textbf{Backbone vs. Projection Head.} Finally, comparing the average mAP evaluated on the projection head ($96.7\%$ for KL Heat) to the mAP on the frozen backbone embeddings ($96.3\%$ for KL Heat) reveals that the structural improvements transfer successfully to the core representations. While there is a standard, slight drop when evaluating the embeddings directly across all methods, models regularized with continuous metrics maintain excellent embedding performance. This confirms that the uniformly distributed geometry is genuinely learned by the backbone and not merely overfitted within the projection head.

\begin{table}[h]
\caption{Disk Texture}
\label{tab:retrieval_disk}
\centering
\begin{tabular}{lccccc}
\toprule
Method & Recall@1 & Recall@3 & Recall@5 & mAP & mAP (emb) \\
\midrule
SUSReg (Stochastic Baseline) & 88.7 & 95.5 & 97.2 & 92.4 & 90.1 \\
\midrule
MMD (Induced SUSReg $\bar{k}$) & 87.9 & 95.1 & 97.1 & 91.9 & 88.8 \\
MMD (Heat, $t=5/d$)  & 91.7 & 96.4 & 97.8 & 94.4 & 93.1 \\
MMD (Bandlimited, $L=2$) & 90.5 & 95.0 & 96.2 & 93.1 & 92.9 \\
\midrule
KSD (Heat, $t=5/d$)  & 91.9 & 96.5 & 97.7 & 94.5 & 92.5 \\
KSD (Bandlimited, $L=2$) & 91.1 & 95.7 & 96.7 & 93.7 & 93.3 \\
\midrule
KL (Heat)            & 97.0 & 98.7 & 99.1 & 97.9 & 97.2 \\
\bottomrule
\end{tabular}
\end{table}

\begin{table}[h]
\caption{Cloud Texture}
\label{tab:retrieval_cloud}
\centering
\begin{tabular}{lccccc}
\toprule
Method & Recall@1 & Recall@3 & Recall@5 & mAP & mAP (emb) \\
\midrule
SUSReg (Stochastic Baseline) & 89.7 & 95.4 & 96.8 & 92.9 & 91.4 \\
\midrule
MMD (Induced SUSReg $\bar{k}$) & 90.2 & 95.9 & 97.6 & 93.4 & 91.4 \\
MMD (Heat, $t=5/d$)  93.1 & 97.0 & 98.1 & 95.3 & 94.1 \\
MMD (Bandlimited, $L=2$) 93.3 & 97.0 & 97.7 & 95.3 & 95.0 \\
\midrule
KSD (Heat, $t=5/d$)  & 93.4 & 97.0 & 97.9 & 95.4 & 94.6 \\
KSD (Bandlimited, $L=2$) & 91.5 & 95.9 & 96.9 & 94.0 & 93.1 \\
\midrule
KL (Heat)            & 96.3 & 98.0 & 98.5 & 97.3 & 96.2 \\
\bottomrule
\end{tabular}
\end{table}

\begin{table}[h]
\caption{Wood Texture}
\label{tab:retrieval_wood}
\centering
\begin{tabular}{lccccc}
\toprule
Method & Recall@1 & Recall@3 & Recall@5 & mAP & mAP (emb) \\
\midrule
SUSReg (Stochastic Baseline) & 93.3 & 99.1 & 99.7 & 96.2 & 95.6 \\ 
\midrule
MMD (Induced SUSReg $\bar{k}$) & 95.0 & 99.4 & 99.7 & 97.2 & 96.3 \\
MMD (Heat, $t=5/d$)  & 95.7 & 99.4 & 99.8 & 97.6 & 96.7 \\
MMD (Bandlimited, $L=2$) & 96.0 & 99.4 & 99.7 & 97.7 & 96.4 \\
\midrule
KSD (Heat, $t=5/d$)  & 96.9 & 99.6 & 99.9 & 98.3 & 97.3 \\
KSD (Bandlimited, $L=2$) & 97.4 & 99.6 & 99.8 & 98.5 & 97.7 \\
\midrule
KL (Heat)            & 99.4 & 99.8 & 99.9 & 99.7 & 99.2 \\
\bottomrule
\end{tabular}
\end{table}

\begin{table}[h]
\caption{Flake Texture}
\label{tab:retrieval_flake}
\centering
\begin{tabular}{lccccc}
\toprule
Method & Recall@1 & Recall@3 & Recall@5 & mAP & mAP (emb) \\
\midrule
SUSReg (Stochastic Baseline) & 83.0 & 94.3 & 97.3 & 89.1 & 88.6 \\
\midrule
MMD (Induced SUSReg $\bar{k}$) & 83.5 & 94.2 & 97.3 & 89.3 & 88.0 \\
MMD (Heat, $t=5/d$)  & 85.0 & 95.4 & 97.7 & 90.5 & 89.1 \\
MMD (Bandlimited, $L=2$) & 86.1 & 95.0 & 97.2 & 90.9 & 90.3 \\
\midrule
KSD (Heat, $t=5/d$)  & 86.0 & 95.2 & 97.5 & 91.0 & 89.8 \\
KSD (Bandlimited, $L=2$) & 86.2 & 95.1 & 97.2 & 91.0 & 90.3 \\
\midrule
KL (Heat)            & 88.6 & 94.0 & 95.8 & 91.7 & 92.5 \\
\bottomrule
\end{tabular}
\end{table}

\end{document}